\newtheorem{lemma}{Lemma}
\newtheorem{definition}{Definition}
\newtheorem{theorem}{Theorem}
\newtheorem{remark}{Remark}
\newcommand{\upperRomannumeral}[1]{\uppercase\expandafter{\romannumeral#1}}
\newcommand{\U}{\mathcal{U}}
\newcommand{\UU}{\mathsf{U}}
\newcommand{\I}{\mathcal{I}}
\newcommand{\N}{\mathsf{N}}
\newcommand{\tl}{\widetilde{L}}
\newcommand{\bE}{\bar{\mathcal{E}}}
\newcommand{\R}{\mathsf{B}}
\newcommand{\PP}{\mathbb{P}}
\newcommand{\E}{\mathbb{E}}
\newcommand{\qa}{Q_0^\mathrm{a}}
\newcommand{\qaa}{Q_1^\mathrm{a}}
\newcommand{\qr}{Q_0^\mathrm{r}}
\newcommand{\qrr}{Q_1^\mathrm{r}}
\newcommand{\s}{\mathcal{S}}
\newcommand{\T}{\mathcal{T}}
\newcommand{\hqab}{\widehat{Q}_{ab}}
\newcommand{\hqabb}{\widehat{Q}_{a'b}}
\newcommand{\ta}{\theta_{\mathcal{A}}}
\newcommand{\tr}{\theta_{\mathcal{R}}}
\newcommand{\ba}{\bar{a}}
\newcommand{\bb}{\bar{b}}
\newcommand{\darm}{d_{\mathcal{AR}}}
\newcommand{\mc}[1]{{\color{blue}#1}}
\DeclareMathOperator*{\argmax}{arg\,max}
\begin{document}
%
\title{{\sc Mc2g}: An Efficient Algorithm for Matrix \\ Completion with Social and Item Similarity Graphs}
%
%
%

\author{Qiaosheng Zhang, Geewon Suh,
	Changho Suh,~\IEEEmembership{Senior~Member,~IEEE,}
	Vincent Y.~F.~Tan,~\IEEEmembership{Senior~Member,~IEEE}
	\thanks{Q. Zhang and G. Suh contributed equally to this work.}
	\thanks{Q. Zhang and V.~Y.~F.~Tan are with the Department of Electrical and Computer Engineering, National University of Singapore (Emails: elezqiao@nus.edu.sg, vtan@nus.edu.sg). V.~Y.~F.~Tan is also with the Department of Mathematics, National University of Singapore. G. Suh and C. Suh are with the School of Electrical Engineering, KAIST (Email: gwsuh91@kaist.ac.kr, chsuh@kaist.ac.kr). }  }

%
%

\markboth{}%
{Shell \MakeLowercase{\textit{et al.}}: Bare Demo of IEEEtran.cls for IEEE Journals}
%



\maketitle


\begin{abstract}
In this paper, we design and analyze  {\sc Mc2g} (\underline{M}atrix \underline{C}ompletion with \underline{2} \underline{G}raphs), an algorithm that performs matrix completion in the presence of social and item similarity graphs. {\sc Mc2g} runs in quasilinear time and is parameter free. It is based on spectral clustering and local refinement steps.  
The expected number of sampled entries required for {\sc Mc2g} to succeed (i.e., recover the clusters in the graphs and complete the matrix) matches an information-theoretic lower bound up to a constant factor  for a wide range of parameters.  We show via extensive experiments on both synthetic and real datasets that {\sc Mc2g} outperforms other state-of-the-art matrix completion algorithms that leverage graph side information.
 
\end{abstract}

\begin{IEEEkeywords}
Matrix completion, Community detection, Stochastic block model, Spectral method, Graph side-information.
\end{IEEEkeywords}

\section{Introduction} \label{sec:introduction}

With the ubiquity of social networks such as Facebook and Twitter, it is increasingly convenient to collect similarity information amongst users. It has been shown that exploiting this similarity information in the form of a \emph{social graph} can significantly improve the quality of recommender systems~\cite{aggarwal1999horting,massa2007trust, ma2011recommender,ahn2018binary,jo2020discrete, kalofolias2014matrix} compared to traditional recommendation algorithms (e.g., collaborative filtering~\cite{goldberg1992using, sarwar2001item}) that rely merely on rating information. This improvement is particularly pronounced in the presence of the so-called \emph{cold-start problem} in which we would like to recommend items to a user who has not rated any items, but we do possess his/her similarity information with other users. Similarly, an \emph{item similarity graph} is sometimes also available for exploitation---it can be constructed either from the features of items~\cite{condliff1999bayesian,rattigan2007graph}, or from users' behavior history (as has been done by Taobao~\cite{wang2018billion}). Again, this can help in solving the \emph{dual cold-start problem}, namely the learner has no information  about new items that have not been rated by any user.

While there have been numerous studies considering how to exploit graph side information to enhance recommender systems, most of the algorithms developed so far exploit \emph{only one} graph (either the social or the item similarity graph). As mentioned above, \emph{both} graphs are often available in many real-life applications, and it has been shown in a prior theoretical study~\cite{eric2021} that there are scenarios in which exploiting two graphs yields strictly more benefits than exploiting only one graph. This work builds upon~\cite{eric2021} which focuses on {\em fundamental limits}, but  does not propose computationally efficient algorithms that achieve the limits. Our main contribution is to design and analyze  a computationally efficient algorithm---which we name {\sc Mc2g}---for a matrix completion problem, wherein both the social and item similarity graphs are available.  We also provide theoretical  guarantees on the expected number of sampled entries for {\sc Mc2g} to succeed, and further show that it meets an information-theoretic lower bound up to a constant factor. It is worth highlighting that {\sc Mc2g}  is  applicable to a more general setting than that considered  in~\cite{eric2021}, thus the theoretical results developed in this work further generalize the theory in~\cite{eric2021}.  For example, we consider general discrete-valued ratings instead of binary ratings.

We consider a setting in which there are $n$ users and $m$ items. Users are partitioned into $k_1 \ge 2$ clusters, while items are partitioned into $k_2 \ge 2$ clusters.  Users' ratings to items are chosen from an \emph{arbitrarily} pre-assigned finite set (e.g., a reasonable choice is $\{1,2,3,4,5\}$, which models the Netflix prize challenge~\cite{bennett2007netflix}).
The $n \times m$ rating matrix is generated  according to a generative model which we describe  in Section~\ref{sec:model}. The learner observes three pieces of information: (i) a sub-sampled rating matrix with each entry being sampled independently with probability $p$; (ii) a social graph generated according to a celebrated generative model for random graphs---the \emph{stochastic block model} (SBM)~\cite{holland1983stochastic}; and (iii) an item similarity graph generated according to another SBM. The task is to exactly recover the clusters of both users and items, as well as to complete the matrix.
Our model significantly generalizes the models considered in several related works with theoretical guarantees~\cite{ahn2018binary,jo2020discrete, eric2021}, by relaxing some constraints therein, e.g., (i) users/items are only partitioned into two equal-sized clusters, and (ii) only binary ratings are allowed.

\subsection{Main contributions} 
Our main contributions are summarized as follows.

\begin{enumerate}[wide, labelwidth=!, labelindent=0pt]
\item We develop a computationally efficient algorithm {\sc Mc2g} that runs in \emph{quasilinear time}. {\sc Mc2g} is a multi-stage algorithm that follows the ``\emph{from global to local}'' principle\footnote{This principle is not only applicable to matrix completion~\cite{keshavan2010matrix,jain2013low}, but has also been applied to many other problems such as community detection~\cite{abbe2015community, abbe2017community,gao2017achieving}, phase retrieval~\cite{candes2015phase,netrapalli2015phase}, etc.}---it first adopts a \emph{spectral clustering method} on graphs to obtain initial estimates of user/item clusters, and then refines each user/item individually based on {\em local} maximum likelihood estimation (MLE). {\sc Mc2g} is also a parameter-free algorithm that does not need the knowledge of the model parameters.  Under the symmetric setting wherein both the social and item similarity graphs are generated according to symmetric SBMs~\cite[Def.~2]{abbe2017community}, we show that {\sc Mc2g} succeeds in the sense of recovering the missing entries of the sub-sampled matrix and the clusters with high probability as long as the number of samples exceeds a bound presented in Theorem~\ref{thm:algorithm}. While the theoretical guarantee requires the symmetric assumption, we emphasize that {\sc Mc2g} is universally applicable to all matrix completion problems with two-sided graph side information.

\item  We also provide an information-theoretic lower bound that matches the bound in Theorem~\ref{thm:algorithm} up to a constant factor; this demonstrates the order-wise optimality of {\sc Mc2g}. As a by-product, the aforementioned theoretical results also generalize the theory developed in the prior work~\cite{eric2021}, which was focused on a simpler setting in which both users and items are partitioned into two clusters. 

\item We conduct extensive experiments on synthetic datasets to verify that the results show keen agreement with the derived theoretical guarantee of {\sc Mc2g} in Theorem~\ref{thm:algorithm}. We further demonstrate the superior performance of {\sc Mc2g} by comparing it with several state-of-the-art matrix completion algorithms that leverage graph side information, such as matrix factorization with social regularization (SoReg)~\cite{ma2011recommender}, and a spectral clustering method with local refinements using {\em only} the social graph or only the item graph~\cite{ahn2018binary}. {\sc Mc2g} is often orders of magnitude better than the competing algorithms in terms of the mean absolute error (MAE).

\item  Finally we apply {\sc Mc2g} to datasets with \emph{real} social and item similarity graphs (i.e., the LastFM  social network~\cite{rozemberczki2020characteristic} and political blogs network~\cite{adamic2005political}). Our experimental results show that {\sc Mc2g} works well when the observed graphs are derived from real-world applications; this further confirms that {\sc Mc2g} is universal, as the real graphs do not satisfy the symmetry assumptions. Finally, we compare {\sc Mc2g} with the other aforementioned matrix completion algorithms on the dataset with real graphs. Experimental results show that {\sc Mc2g} outperforms these existing algorithms.  
\end{enumerate}

\subsection{Related works}      

Due to the wide applicability of matrix completion (such as recommender systems), the past decade has witnessed the developments of many efficient matrix completion algorithms, such as~\cite{candes2010power,marjanovic2012l_q,chen2015signal,dai2011subspace,ma2014decomposition}.  In the context of recommender systems, the design of algorithms that exploit graph side information (especially the social graph) has attracted much attention, and some of the works~\cite{kalofolias2014matrix,monti2017geometric,wang2019neural} exploit both the social and item similarity graphs. Although these algorithms usually have better empirical performance than traditional ones, most of them neither quantify the gains of exploiting graph side information, nor provide any theoretical guarantees.

We note that another line  of works focused on characterizing the fundamental limits of  matrix completion in which the matrix to be recovered is generated according to a  certain generative model for the clusterings of the users and/or items. Ahn \emph{et al.}~\cite{ahn2018binary} considered a simple setting where ratings are binary and a graph encodes the structure of two clusters, and characterized the expected number of sampled entries required for the matrix completion task. Follow-up works~\cite{jo2020discrete, yoon2018joint} relaxed the assumptions in~\cite{ahn2018binary}, but are still restricted to exploiting the use of a social graph. The recent work~\cite{eric2021}  investigated a more general setting in which both the social and item similarity graphs are available, and quantified the gains of exploiting two graphs  by establishing information-theoretic lower and upper bounds. However, a computationally efficient algorithm that achieves the limit promised by MLE was not developed in~\cite{eric2021}. Given that the MLE is not computationally feasible, there is a pressing need to develop efficient algorithms. This precisely sets the goal of this work. Additionally, this work studies a  generalized model that spans multiple user/item clusters and discrete-valued rating matrices. This is in contrast to~\cite{eric2021} which focuses on two clusters and binary ratings.

Another field relevant to this work is \emph{community detection}, which is the problem of partitioning nodes of  an undirected graph into different clusters/communities. When the graphs are generated according to SBMs, the information-theoretic limits for exact recovery of clusters~\cite{abbe2015exact,mossel2015reconstruction, abbe2015community, abbe2017community, zhang2016minimax, hajek2017information} have been established. These limits also play a role in establishing the theoretical guarantee of {\sc Mc2g} (see the third item in Remark~\ref{remark:thm} for details), as our algorithm includes the clustering step for users and items in the process of matrix completion. It has also been shown that side information is in general helpful for community detection~\cite{saad2018community, saad2018exact, mayya2019mutual,asadi2017compressing}. This observation is pertinent and related to our work because our problem can also be viewed as recovering users/items clusters with side information in the form of a rating matrix. Besides, our problem is also related to the labelled or weighted SBM problem, if the two SBMs that govern the social and item similarity graphs are merged to a single unified SBM (interested readers are referred to~\cite[Remark 4]{eric2021} for details).

\subsection{Outline}
This paper is organized as follows. We first introduce the problem setup in Section~\ref{sec:model}, and then describe our efficient algorithm {\sc Mc2g} in Section~\ref{sec:algorithm}. Section~\ref{sec:result} presents our main theoretical results: (i) the theoretical  guarantee for {\sc Mc2g} and (ii) the information-theoretical lower bound. These results are proved in Sections~\ref{sec:proof} and~\ref{sec:proof2}, respectively. Experimental results are presented in Section~\ref{sec:experiment}.

\section{Problem statement} \label{sec:model}

We consider a recommender system with $n$ users and $m$ items. Ratings from users to items are chosen from an arbitrary finite alphabet $\mathcal{Z}$ (e.g., $\mathcal{Z} = \{1,2,3,4,5\}$). It is assumed that users are partitioned into $k_1 \ge 2$ disjoint clusters $\left\{\U_1,\U_2, \ldots,\U_{k_1}\right\}$, and items are partitioned into $k_2\ge 2$ disjoint clusters $\left\{\I_1,\I_2, \ldots,\I_{k_2}\right\}$. We define\footnote{For any integer $s \ge 1$, let $[s]$ represent the set of integers $\{1,\ldots,s\}$. } $\sigma: [n] \to [k_1]$ as the \emph{label function for users} such that $\sigma(i) = a$ if user $i$ belongs to cluster $\U_a$. On the contrary, each clutser $\U_a$ can be represented as $\U_a = \{i\in[n]: \sigma(i) = a\}$. Thus, $\sigma$ can be viewed as an alternative (and more concise) representation of the clusterings of users $\{\U_{a} \}_{a \in [k_1]}$. Similarly, we define  $\tau: [m] \to [k_2]$ as the \emph{label function for items} such that $\tau(j) = b$ if item $j$ belongs to cluster $\I_b$.

\begin{table}
	\caption{Nominal ratings from users to items}
	\label{table:1}
	\centering
	\begin{tabular}{lllll}
		\toprule
		& Cluster $\I_1$      & Cluster $\I_2$ & $\ldots \ldots$ & Cluster $\I_{k_2}$   \\
		\midrule
		Cluster $\U_1$ & \multicolumn{1}{c}{$z_{11}$} & \multicolumn{1}{c}{$z_{12}$} & \multicolumn{1}{c}{$\ldots$} & \multicolumn{1}{c}{$z_{1k_2}$}   \\
		Cluster $\U_2$     &
		\multicolumn{1}{c}{$z_{21}$} & \multicolumn{1}{c}{$z_{22}$} & \multicolumn{1}{c}{$\ldots$} & \multicolumn{1}{c}{$z_{2k_2}$}      \\
		\multicolumn{1}{c}{$\vdots$}     &
		\multicolumn{1}{c}{$\vdots$} & \multicolumn{1}{c}{$\vdots$} & \multicolumn{1}{c}{$\ddots$} & \multicolumn{1}{c}{$\vdots$}\\
		Cluster $\U_{k_1}$     &
		\multicolumn{1}{c}{$z_{k_1 1}$} & \multicolumn{1}{c}{$z_{k_1 2}$} & \multicolumn{1}{c}{$\ldots$} & \multicolumn{1}{c}{$z_{k_1 k_2}$} \\
		\bottomrule
	\end{tabular}
\end{table}

\begin{figure}[t]
	\begin{subfigure}[t]{0.24\textwidth}
		\includegraphics[width=\textwidth]{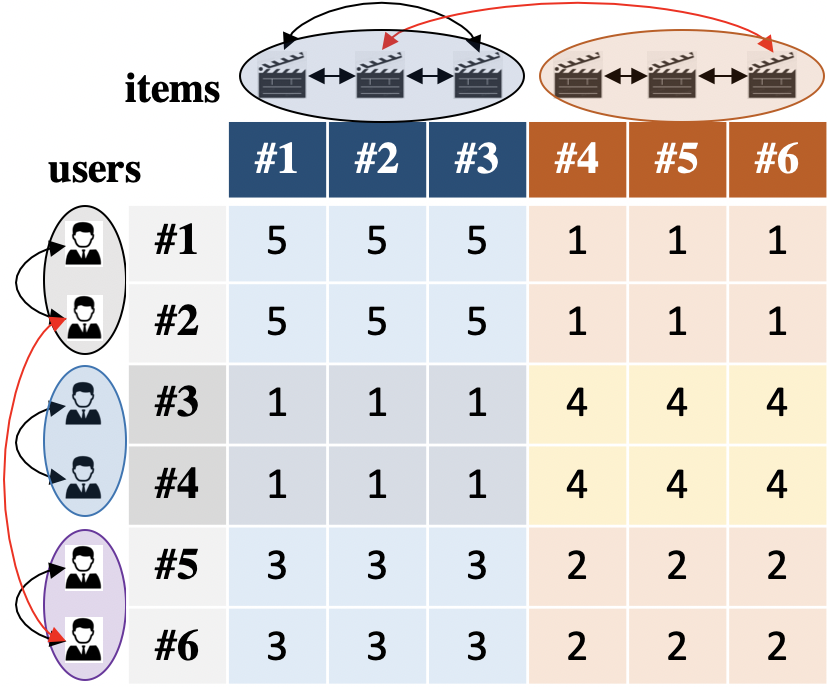}
		\caption{Nominal matrix $\N$}
		\label{fig:1}
	\end{subfigure}
	\begin{subfigure}[t]{0.24\textwidth}
		\includegraphics[width=0.88\textwidth]{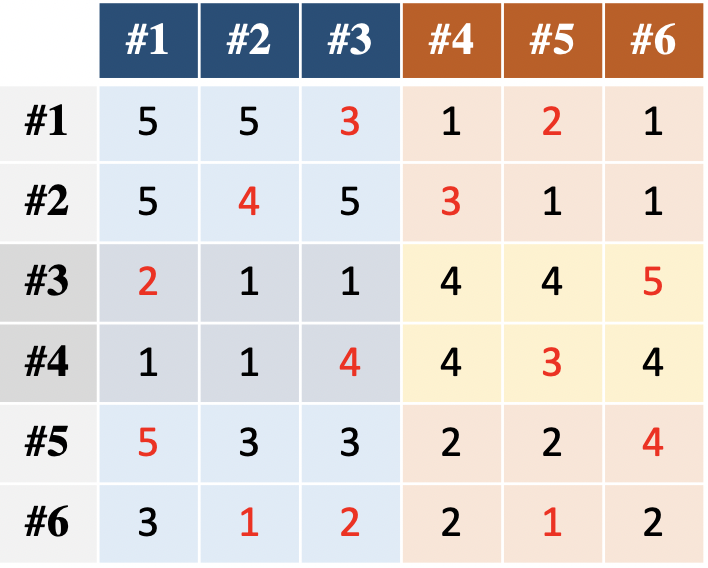}
		\caption{Personalized rating matrix}
		\label{fig:2}
	\end{subfigure}
	\caption{An example with $6$ users (partitioned into $3$ clusters) and $6$ items (partitioned into $2$ clusters). The nominal ratings are chosen from $\mathcal{Z} \in \{1,2,3,4,5\}$, and are set to be $z_{11} = 5, z_{12} = 1, z_{21} = 1, z_{22} = 4, z_{31} = 3, z_{32} = 2$.}
	\label{fig:12}
\end{figure}

As users in the same cluster are more likely to share similar preference (which is called \emph{homophily}~\cite{mcpherson2001birds} in the social sciences), we introduce the notion of {\it nominal ratings} to represent the levels of interest from certain user clusters to certain item clusters. Specifically, for all the users in cluster $\U_a$, their nominal ratings to all the items in cluster $\I_b$ (where $a\in[k_1]$, $b\in[k_2]$) are given by $z_{ab} \in \mathcal{Z}$ (as shown in Table~\ref{table:1}). That is, the nominal ratings given by users in the same clusters are the same, and the nominal ratings received by items in the same cluster are also identical. 
Thus, given $\sigma$ (the labels of $n$ users), $\tau$ (the labels of $m$ items), and $\{z_{ab}\}$ (the nominal ratings), the corresponding {\it nominal matrix} $\N \in \mathcal{Z}^{n \times m}$ is an $n \times m$ matrix that contains the nominal ratings from $n$ users to $m$ items, and each entry $\N_{ij}$ (the nominal rating from user $i$ to item $j$) equals $z_{\sigma(i) \tau(j)}$. An example of the nominal matrix is illustrated in Fig.~\ref{fig:1}.

Our model also has the flexibility that the interest of each individual user may differ from the nominal interest of the cluster he/she belongs to. We model this flexibility by assuming that the {\it personalized rating} $V_{ij} \in \mathcal{Z}$ of user $i$ to item $j$ is a stochastic function of the nominal rating $\mathsf{N}_{ij}$. More precisely, we define $Q_{V|Z} \in \mathcal{P}(\mathcal{Z} \times \mathcal{Z})$ as the \emph{personalization distribution} that reflects the diversity of users in the same cluster. For each user $i$, his/her personalized rating $V_{ij} \in \mathcal{Z}$ to item $j$ is distributed according to $Q_{V|Z=\N_{ij}} \in \mathcal{P}(\mathcal{Z})$. A natural assumption we adopt is that $Q_{V|Z=z}(z) > Q_{V|Z=z}(z')$ for all $z'\ne z$; that is, if the nominal rating is $z \in \mathcal{Z}$, then the personalized rating is \emph{most likely} to be $z$. For a specific user cluster $\U_a$ and an item cluster $\I_b$, all the personalized ratings $\{V_{ij}\}_{i \in \U_a, j \in \I_b}$ (corresponding to all the user-item pairs $(i,j)$ such that $i \in \U_a$ and $j \in \I_b$) follow the same distribution $Q_{V|Z=z_{ab}}$. For simplicity, we abbreviate $Q_{V|Z=z_{ab}}$ as $Q_{ab}$. An example of the personalized rating matrix is illustrated in Fig.~\ref{fig:2}.


\subsection{Observations} \label{sec:observation}
The learner observes three pieces of information: 
\begin{enumerate}[wide, labelwidth=!, labelindent=0pt]
	\item A sub-sampled rating matrix $\mathsf{U}$, with each entry $\mathsf{U}_{ij} = V_{ij}$ with probability (w.p.) $p$ and $\mathsf{U}_{ij} = \sf{e}$ (erasure symbol) w.p. $1 - p$. We refer to $p$ as the {\it sample probability} and $mnp$ as the expected number of sampled entries.   
	
	\item A social graph $G_{1} = (\mathcal{V}_1, \mathcal{E}_1)$, where $\mathcal{V}_1$ is the set of $n$ user nodes. Let $\mathsf{B}$ be a $k_1 \times k_1$ symmetric {\it connectivity matrix} that represents the probabilities of connecting two nodes in $G_1$. Each pair of nodes $(i,i')$ is connected (i.e., $(i,i') \in \mathcal{E}_1$) independently w.p. $\mathsf{B}_{\sigma(i) \sigma(i')}$. 
	
	\item An item graph $G_{2} = (\mathcal{V}_2, \mathcal{E}_2)$, where $\mathcal{V}_2$ is the set of $m$ item nodes. Let $\mathsf{B}'$ be a $k_2 \times k_2$ symmetric connectivity matrix that represents the probabilities of connecting two nodes in $G_2$. Each pair of nodes $(j,j')$ is connected (i.e., $(j,j') \in \mathcal{E}_2$) independently w.p. $\mathsf{B}'_{\tau(j)\tau(j')}$. 
\end{enumerate}

\subsection{Objective} The learner is tasked to design an estimator $\phi=\phi(\mathsf{U}, G_1,G_2)$ to exactly recover both the user clusters $\{\U_a\}_{a\in[k_1]}$ and item clusters $\{\I_b\}_{b \in [k_2]}$  (or equivalently, the label functions $\sigma$ and $\tau$), as well as to reconstruct the nominal matrix $\mathsf{N}$. The output of the estimator $\phi$ is denoted by $(\widehat{\sigma}, \widehat{\tau}, \widehat{\mathsf{N}})$.

To measure the accuracies of the estimated label functions $\widehat{\sigma}$ and $\widehat{\tau}$, we define the {\em misclassification proportions} as 
\begin{align}
l_1(\widehat{\sigma}, \sigma) &:= \min_{\pi \in \mathcal{S}_{k_1}} \frac{1}{n} \sum_{i\in[n]} \mathbbm{1}\{\widehat{\sigma}(i) \ne \pi(\sigma(i)) \}, \label{eq:weak1} \\
l_2(\widehat{\tau}, \tau) &:= \min_{\pi \in \mathcal{S}_{k_2}} \frac{1}{m} \sum_{j\in[m]} \mathbbm{1}\{\widehat{\tau}(j) \ne \pi(\tau(j)) \},\label{eq:weak2}
\end{align} 
where $\mathcal{S}_{k_1}$ (resp. $\mathcal{S}_{k_2}$) is the set of all permutations of $[k_1]$ (resp. $[k_2]$). The permutations are introduced because it is only possible to recover the \emph{partitions} of users/items, rather than the actual labels (i.e., the best we can hope for is to ensure $l_1(\widehat{\sigma}, \sigma) = 0$ and $l_2(\widehat{\tau}, \tau) = 0$).


Furthermore, we also define the concept of {\em weak recovery} which plays a role in the intermediate steps of our algorithm. 
\begin{definition} \label{def:weak}
An estimate  $\widehat{\sigma}$ (resp. $\widehat{\tau}$) is said to achieve  \emph{weak recovery} if the misclassification proportion $l_1(\widehat{\sigma}, \sigma) \to 0$ as $n$ tends to infinity (resp. $l_2(\widehat{\tau}, \tau) \to 0$ as $m$ tends to infinity).
\end{definition}

\section{{\sc Mc2g}: A computationally efficient, statistically optimal algorithm} \label{sec:algorithm}

In this section, we present a computationally efficient multi-stage algorithm called {\sc Mc2g} for recovering the clusters of users and items, and the nominal matrix $\mathsf{N}$,  given the social and item similarity graphs. Knowledge of the model parameters (e.g., connectivity matrices $\mathsf{B}$ and $\mathsf{B}'$ and personalization distribution $Q_{V|Z}$) is not needed for {\sc Mc2g} to succeed,  as they will be estimated on-the-fly. 
Roughly speaking, {\sc Mc2g} consists of four stages: Stage 1 achieves \emph{weak recovery} of the user/item clusters; Stage 2 estimates the model parameters $\R$, $\R'$, and $Q_{V|Z}$; and Stages 3 and 4 respectively refine these estimates of users and items via \emph{local refinements} steps. The inputs include the sub-sampled rating matrix $\mathsf{U}$ and two graphs $G_1$ and $G_2$.  
 
Before describing our algorithm {\sc Mc2g} in detail in Subsection~\ref{sec:detail}, we want to first point out a common issue that often arises in the analysis of multi-stage algorithms.
When analyzing the error probability of multi-stage algorithms, one needs to be cognizant of the dependencies between random variables in different stages. For example, a pair of random variables that are initially independent may become dependent conditioned on the success of a preceding stage. We circumvent this issue by using an \emph{information splitting} method inspired by prior works~\cite{chin2015stochastic,abbe2015exact,abbe2015community} on community detection.  As a concrete example, Fig.~\ref{fig:partition} illustrates how we split the information of the social graph into two pieces, where the first piece is for Stage 1 and the second piece is for subsequent stages.  Information splitting can be viewed as a preliminary step for our main algorithm {\sc Mc2g}, and is formally described in Section~\ref{sec:split}.

\begin{remark}{\em
An alternative approach to circumvent the aforementioned issue of dependence is to use the so-called {\em uniform analysis} technique, which has been adopted by some other works~\cite{chen2016community, ahn2018binary, jo2020discrete}. However, this requires more rounds of local refinements and thus increases the computational complexity. }
\end{remark}

\subsection{Information splitting} \label{sec:split}
The high-level idea is to split the observations $(\UU, G_1, G_2)$ into two parts---the first part, denoted as $(G_1^\mathrm{a}, G_2^\mathrm{a})$, is used for weak recovery of users and items in Stage 1; while the second part, denoted as $(\UU, G^\mathrm{b}_1, G^\mathrm{b}_2)$, is used for estimating the parameters  and for local refinements (exact recovery) of each user and item in Stages 2--4. We elaborate on the information splitting method as follows.

\begin{figure}[t]
		\includegraphics[width=0.45\textwidth]{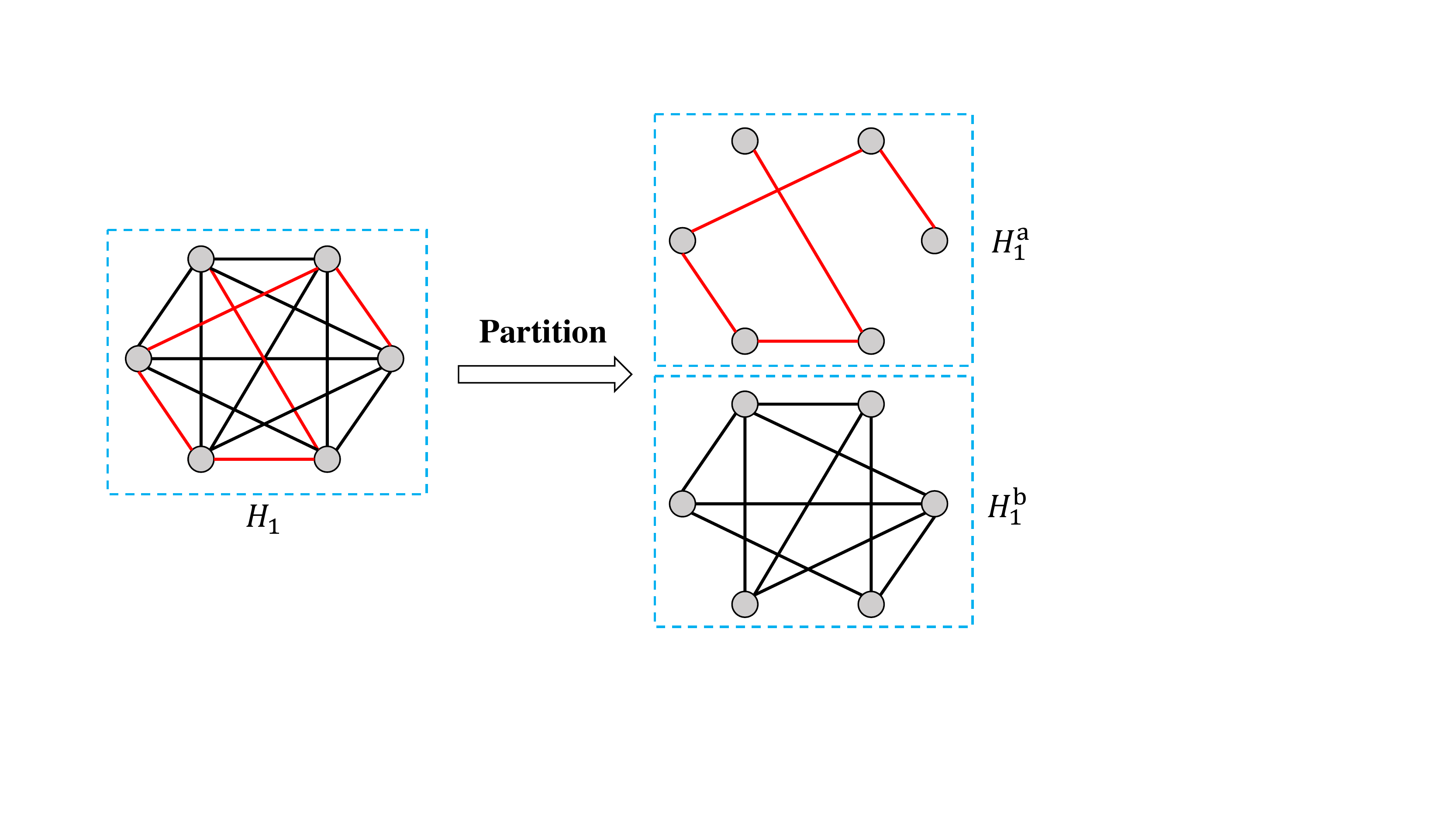}
	\caption{The partition of a complete graph $H_1$ with $n = 6$ nodes  into two sub-graphs $H_1^{\mathrm{a}}$ and $H_1^{\mathrm{b}}$.}
	\label{fig:partition}
\end{figure}

\begin{enumerate}[wide, labelwidth=!, labelindent=0pt]
	\item Let $H_1 = (\mathcal{V}_1, \bE_1)$ be the \emph{complete graph} with vertex set $\mathcal{V}_1 = [n]$ and edge set $\bE_1$ which contains all the  $\binom{|\mathcal{V}_1|}{2}$ edges on $\mathcal{V}_1$. We randomly partition $H_1$ into two sub-graphs $H_1^{\mathrm{a}} = (\mathcal{V}_1, \bE_1^\mathrm{a})$ and $H_1^{\mathrm{b}} = (\mathcal{V}_1, \bE_1^\mathrm{b})$ such that $H_1^{\mathrm{a}}$ is an {\it Erd\H{o}s-R\'{e}nyi (ER) graph} on $\mathcal{V}_1$ with edge probability $1/\sqrt{\log n}$. That is, each $e \in \bE_1$ is sampled (independently) to $\bE_1^\mathrm{a}$ with probability $1/\sqrt{\log n}$, and to $\bE_1^\mathrm{b}$ with probability $1 - 1/\sqrt{\log n}$, where $\bE_1^{\mathrm{b}}$ is the complement of $\bE_1^{\mathrm{a}}$. An example is illustrated in Fig.~\ref{fig:partition}.  This partition is done independently of the generation of the SBM $G_1$. For any realizations $H_1^{\mathrm{a}} = h_1^{\mathrm{a}}$ and $H_1^{\mathrm{b}} = h_1^{\mathrm{b}}$, let 
	\begin{align}
	G_1^\mathrm{a} := h_1^{\mathrm{a}} \cap G_1 \ \text{ and } \ G_1^\mathrm{b} := h_1^{\mathrm{b}} \cap G_1. 
	\end{align}
	be two sub-SBMs on sub-graphs $h_1^{\mathrm{a}}$ and $h_1^{\mathrm{b}}$, respectively.\footnote{With a slight abuse of notations, we use $h_1^{\mathrm{a}} \cap G_1$ (resp. $h_1^{\mathrm{b}} \cap G_1$) to represent a graph with edge set being the intersection between the edge sets of $h_1^{\mathrm{a}}$ (resp, $h_1^{\mathrm{b}}$) and $G_1$. More specifically, for the sub-SBM $G_1^\mathrm{a}$ (resp. $G_1^\mathrm{b}$), any pairs of nodes $(i,i')$ are connected with probability $\R_{\sigma_i \sigma_{i'}}$ if $(i,i') \in \bar{\mathcal{E}}_1^\mathrm{a}$ (resp. $(i,i') \in \bar{\mathcal{E}}_1^\mathrm{b}$), and with probability zero otherwise.}
	
	\item  Similarly, let $H_2 = (\mathcal{V}_2, \bE_2)$ be the complete graph with vertex set $\mathcal{V}_2 = [m]$ and edge set $\bE_2$, $H_2^{\mathrm{a}}$ is an ER graph on $\mathcal{V}_2$ with edge probability $1/\sqrt{\log m}$, and $\bE_2^{\mathrm{b}}$ is the complement of $\bE_2^{\mathrm{a}}$. For any $H_2^{\mathrm{a}} = h_2^{\mathrm{a}}$ and $H_2^\mathrm{b} = h_2^\mathrm{b}$, we also define 
	\begin{align}
	G_2^\mathrm{a} := h_2^{\mathrm{a}} \cap G_2 \ \text{ and } \ G_2^\mathrm{b} := h_2^\mathrm{b} \cap G_2. 
	\end{align}
\end{enumerate}

\begin{algorithm}
	\SetAlgoLined
	\SetKwInOut{Input}{Input}\SetKwInOut{Output}{Output}
	\SetKwData{Left}{left}\SetKwData{This}{this}\SetKwData{Up}{up}
	\Input{$(\UU, G_1, G_2) = (G_1^\mathrm{a}, G_2^\mathrm{a}) \cup (\UU, G_1^\mathrm{b}, G_2^\mathrm{b})$}
	\Output{Clusters $\{\widehat{\U}_a \}_{a\in[k_1]}$ and $\{\widehat{\I}_b \}_{b\in[k_2]}$ (or label functions $\widehat{\sigma}$ and $\widehat{\tau}$), nominal matrix $\widehat{\mathsf{N}}$}
	
	\BlankLine
	\textbf{Stage 1 (Weak recovery of communities)} \\
	Apply the spectral clustering method on $G_1^\mathrm{a}$ and $G_2^\mathrm{a}$ to obtain initial estimates $\{\U_a^{(0)}\}_{a\in[k_1]}$ and $\{\I_b^{(0)}\}_{b\in[k_2]}$\; 
	
	\BlankLine
	\textbf{Stage 2 (Parameters estimation)} \\
	Estimate connectivity matrices $\mathsf{B}$ and  $\mathsf{B}'$ as per~\eqref{eq:est1}-\eqref{eq:est2};\\
	Estimate personalization distribution $\{\widehat{Q}_{ab}\}$ as per~\eqref{eq:eq}; 
	
	\BlankLine
	\textbf{Stage 3 (Local refinements of users)} \\
	\For{user $i = 1$ $\KwTo$ $n$}{
	Calculate likelihood functions $\{L_a(i)\}_{a\in[k_1]}$;\\
	Let $a_i^* = \argmax_{a \in [k_1]}L_a(i)$, and declare $i \in \U_{a_i^*}$; \\	
	}
	
	\BlankLine
	\textbf{Stage 4 (Local refinements of items)} \\
	\For{item $j = 1$ $\KwTo$ $m$}{
	Calculate likelihood functions $\{L'_b(j)\}_{b\in[k_2]}$;\\
	Let $b_j^* = \argmax_{b \in [k_2]}L'_b(j)$, and declare $j \in \I_{b_j^*}$; \\
	}
	Reconstruct the nominal matrix $\widehat{\mathsf{N}}$ as Per~\eqref{eq:N2}.  
	\caption{{\sc Mc2g}}
	\label{algorithm:1}
\end{algorithm}

\subsection{Algorithm description} \label{sec:detail} 

{\it Stage 1 (Weak recovery of clusters):} We run a spectral clustering method\footnote{To achieve weak recovery of the clusterings of users and items, one can also apply different variants of spectral clustering methods~\cite{abbe2015community,chin2015stochastic,lei2015consistency,gao2017achieving}, semidefinite programming-based methods~\cite{javanmard2016phase}, belief propagation-based methods~\cite{mossel2016density}, or non-backtracking matrix-based methods~\cite{krzakala2013spectral}.} (e.g., Agorithm 2 in~\cite{NIPS2016_a8849b05}) on the social graph $G_1^\mathrm{a}$ to obtain an initial estimate of the label function $\sigma$ (denoted by $\sigma^{(0)}$), and also run a spectral clustering method on the item graph $G_2^\mathrm{a}$  to obtain an initial estimate of the label function $\tau$ (denoted by $\tau^{(0)}$). The estimated user clusters corresponding to $\sigma^{(0)}$ are denoted by $\{\U_a^{(0)}\}_{a\in[k_1]}$ (i.e., $\mathcal{U}_a^{(0)}=(\sigma^{(0)})^{-1}(a)$), and the estimated item clusters corresponding to $\tau^{(0)}$ are denoted by $\{\I_b^{(0)}\}_{b\in[k_2]}$.   These initial estimates $\sigma^{(0)}$ and $\tau^{(0)}$ are expected to serve as good approximations of the true clusters, such that both $\sigma^{(0)}$ and $\tau^{(0)}$ satisfy the weak recovery criteria defined in Definition~\ref{def:weak}. 

{\it Stage 2 (Parameters estimation):} 
For any two sets of nodes $\mathcal{V}$ and $\mathcal{V}'$, the number of edges connecting $\mathcal{V}$ and $\mathcal{V}'$ (in $G_1^\mathrm{b}$ or $G_2^\mathrm{b}$)  is denoted as $e(\mathcal{V}, \mathcal{V}')$. 
Based on the initial estimates $\{\U_a^{(0)}\}_{a\in[k_1]}$ and $\{\I_b^{(0)}\}_{b\in[k_2]}$, we then obtain the MLE for the connectivity matrices $\mathsf{B}$ and $\mathsf{B}'$ of the social and item graphs as 
\begin{align}
&\widehat{\mathsf{B}}_{aa'} = \begin{cases} e(\U_a^{(0)},\U_{a'}^{(0)})/\binom{|\U_a^{(0)}|}{2}, &\text{if } a = a';  \\
e(\U_a^{(0)},\U_{a'}^{(0)})/(|\U_a^{(0)}|\cdot|\U_{a'}^{(0)}|), &\text{if } a \ne a'; 
\end{cases}
\label{eq:est1} \\
&\widehat{\mathsf{B}}'_{bb'} = \begin{cases} e(\I_b^{(0)},\I_{b'}^{(0)})/\binom{|\I_b^{(0)}|}{2}, \  \ &\text{if } b = b';  \\
e(\I_b^{(0)},\I_{b'}^{(0)})/(|\I_b^{(0)}|\cdot |\I_{b'}^{(0)}|), \ \  &\text{if } b \ne b'; 
\end{cases}
\label{eq:est2}
\end{align}
where $a,a' \in [k_1]$ and $b,b' \in [k_2]$. Moreover, we define sets of $(i,j)$-pairs $\mathcal{Q}_{ab}^z$ (where $a \in [k_1]$, $b\in[k_2]$, and $z\in \mathcal{Z}$) as  $$\mathcal{Q}_{ab}^z := \left\{(i,j): \mathsf{U}_{ij} = z, i \in \U_a^{(0)}, j \in \I_b^{(0)}  \right\}.$$
Then, the estimated personalization distribution is given by 
\begin{align}
\widehat{Q}_{ab}(z) := \frac{|\mathcal{Q}_{ab}^z|}{\sum_{z \in \mathcal{Z}} |\mathcal{Q}_{ab}^z|}, \quad \forall a \in [k_1], b \in [k_2]. \label{eq:eq}
\end{align}

{\it Stage 3 (Local refinements of users):} 
This stage refines the classification of each user locally, based on the ratings in $\mathsf{U}$, the social graph $G_1^\mathrm{b}$, and the initial estimates $\{\U_a^{(0)}\}_{a\in[k_1]}$ and $\{\I_b^{(0)}\}_{b\in[k_2]}$. For each user $i \in [n]$, we essentially adopt a \emph{local MLE} to determine which cluster it belongs to. We define the \emph{likelihood function} that reflects how likely user $i$ belongs to cluster $\U_a$ as:
\begin{align} 
L_a(i) &:= \sum_{a'\in[k_1]} e(\{i\}, \U^{(0)}_{a'})\cdot \log\left(\widehat{\R}_{aa'}/(1-\widehat{\R}_{aa'})\right) \notag \\
&\qquad\qquad + \sum_{b \in [k_2]}\sum_{j \in \I_b^{(0)}} \mathbbm{1}\{\UU_{ij} \!\ne\! \mathsf{e} \} \cdot \log \widehat{Q}_{ab}(\UU_{ij}). \label{eq:La}
\end{align}
Let $a_i^\ast := \argmax_{a \in [k_1]}L_a(i)$ be the index of the most likely user cluster for user $i$. {\sc Mc2g} then declares $i \in \widehat{\U}_{a_i^*}$; or equivalently, $\widehat{\sigma}(i) = a_i^*$.

{\it Stage 4 (Local refinements of items):} This stage refines the classification of each item locally, based on $\mathsf{U}$, $G_2^\mathrm{b}$, and the initial estimates $\{\U_a^{(0)}\}_{a\in[k_1]}$ and $\{\I_b^{(0)}\}_{b\in[k_2]}$. For each item $j \in [m]$, we define the likelihood function that reflects how likely item $j$ belongs to cluster $\I_b$ as:  
\begin{align}
L'_b(j) &:= \sum_{b'\in[k_2]} e(\{j\}, \I^{(0)}_{b'})\cdot \log\left(\widehat{\R}'_{bb'}/(1-\widehat{\R}'_{bb'})\right) \notag \\
&\qquad\quad \ + \sum_{a \in [k_1]}\sum_{i \in \U_a^{(0)}} \mathbbm{1}\{\UU_{ij} \!\ne\! \mathsf{e} \} \cdot \log \widehat{Q}_{ab}(\UU_{ij}).
\end{align}
Let $b_j^\ast := \argmax_{b \in [k_2]}L'_b(j)$ be the index of the most likely item cluster for item $j$. {\sc Mc2g} then declares $j \in \widehat{\I}_{b_j^*}$; or equivalently, $\widehat{\tau}(j) = b_j^*$.

Finally, one can recover the nominal matrix $\widehat{\mathsf{N}}$ by setting
\begin{align}
\widehat{\mathsf{N}}_{ij}= \argmax_{z \in \mathcal{Z}} \ \widehat{Q}_{ab}(z), \quad \mathrm{for} \ i \in \widehat{\U}_a, j \in \widehat{\I}_b. \label{eq:N2}
\end{align}

\begin{remark} \label{remark:simplified}
\em
The information splitting method introduced in this section is merely for the purpose of analysis (as discussed in the second paragraph of this section); however, it may not be practical when $n$ and $m$ are \emph{not sufficiently large}, in which case the first part of the graphs $(G_1^\mathrm{a}, G_2^\mathrm{a})$ may be too sparse to achieve weak recovery of the true clusters in Stage 1. Thus, in practice, instead of splitting the graphs $(G_1, G_2)$ into $(G_1^\mathrm{a}, G_2^\mathrm{a})$ (on which Stage 1 is applied) and $(G_1^\mathrm{b}, G_2^\mathrm{b})$ (on which Stages 2--4 are applied), one can skip the information splitting step in Section~\ref{sec:split} and simply apply every stage on the fully-observed graphs $(G_1, G_2)$ for weak recovery, parameter estimations, and local refinements---this is referred to as the \emph{simplified version of {\sc Mc2g}}. In our experiments (Section~\ref{sec:experiment}), we adopt this simplified version of {\sc Mc2g}, and show that it also works well on both synthetic and real datasets. 
\end{remark}


\subsection{Computational Complexity}
Using the iterative power method~\cite{halko2011finding}, the spectral clustering method used to obtain initial estimates of $G^\mathrm{a}_1$ and $G^\mathrm{a}_2$ run in times at most $\mathcal{O}(|\mathcal{E}_1|\log n)$ and $\mathcal{O}(|\mathcal{E}_2|\log m)$ respectively, where $|\mathcal{E}_1| = \mathcal{O}(n \log n)$ and $|\mathcal{E}_2| = \mathcal{O}(m \log m)$ with high probability. In each of the following steps, {\sc Mc2g} requires (at most) a {\em single} pass of all the sub-sampled entries in the rating matrix $\mathsf{U}$ and the edge sets $\mathcal{E}_1$ and $\mathcal{E}_2$, which amounts to at most $\mathcal{O}(\max\{n \log n, m \log m\})$ time. Therefore, the overall computational complexity is $\mathcal{O}(\max\{n (\log n)^2, m (\log m)^2\})$ (i.e., quasilinear in $m$ and $n$) with high probability.

\section{Theoretical guarantees of Mc2g and Information-theoretic lower bounds} \label{sec:result}
This section provides  theoretical guarantees  for {\sc Mc2g}. Under the symmetric setting defined in Subsection~\ref{sec:symmetric}, we characterize the expected number of sampled entries required for {\sc Mc2g} to succeed; the key message there is that this quantity depends critically on (i) the ``qualities'' of the social and item similarity graphs, and (ii) the \emph{squared Hellinger distance} between the rating statistics of different user/item clusters. We further establish an information-theoretic lower bound on the expected number of sampled entries. This bound matches the achievability bound  up to a constant factor, thus demonstrates the order-wise optimality of {\sc Mc2g}.

\subsection{The Symmetric Setting} \label{sec:symmetric}
Under the symmetric setting, it is assumed that (i) the user clusters are of equal size (i.e., $|\U_a| = n/k_1$ for all $a \in [k_1]$) and the item clusters are of equal size (i.e., $|\I_b| = m/k_2$ for all $b \in [k_2]$)\footnote{We implicitly assume that $n$ is divisible by $k_1$ and $m$ is divisible by $k_2$.  In the case that $n$ and $m$ are not multiples of $k_1$ and $k_2$ respectively, rounding operations required to define the set $\Xi$. Such rounding operations, however, do not affect the calculations and results downstream. }, and (ii) the connection probability for each pair of nodes depends only on whether they belong to the same cluster, i.e., the connectivity matrices $\R$ and $\R'$ satisfy
\begin{align}
\R_{aa'} = \begin{cases} \alpha_1, & \text{if } a = a'; \\
\beta_1, & \text{if } a \ne a'; \end{cases} \ \text{and} \ \ \R'_{bb'} = \begin{cases} \alpha_2, & \text{if } b = b'; \\
\beta_2, & \text{if } b \ne b'. \end{cases} \notag 
\end{align}
Similar to the prior work~\cite{eric2021}, we assume $m = \omega(\log n)$ and $n = \omega(\log m)$ such that $m \to \infty$ as $n\to \infty$. 

We note that {\sc Mc2g} is not restricted to the symmetric setting; it can be applied more generally  to asymmetric scenarios. Indeed, for the experiments in Section~\ref{sec:experiment}, we do not make the symmetric assumption. In this section, however, we make this assumption to simplify the presentation of Theorem~\ref{thm:algorithm} and to clearly understand the effect of the parameters of the model on the minimum expected number of sampled entries required for {\sc Mc2g} to succeed.

In the following, we formally define the notion of \emph{exact recovery}. Note that the model is governed by the pair of label functions $(\sigma, \tau)$ together with the nominal matrix $\N$, and we define the \emph{parameter space} that contains all valid $(\sigma, \tau,\N)$ under the symmetric setting as
\begin{align*}
\Xi \triangleq &\Big\{\!(\sigma, \tau,\N)\big| \ \sigma\!:\! [n] \!\to\! [k_1],  \ \big|\{i\!\in\![n]\!:\sigma_i\!=\!a\}\big|\!=\!\frac{n}{k_1}, \forall a\!\in\![k_1]; \\ 
&\ \ \tau\!:\! [m] \to [k_2], \  \big|\{j\in[m]:\tau_j=b\}\big|\!=\!\frac{m}{k_2}, \ \forall b \in [k_2];\\ 
&\ \ \N \in \mathcal{Z}^{n \times m}, \N_{ij} \!=\! \N_{i'j'} \text{ if } \sigma(i)\!=\!\sigma(i') \text{ and } \tau(j) \!=\! \tau(j') \!\Big\}.
\end{align*} 
Let $(\sigma,\tau,\N)$ be the ground truth, and $(\widehat{\sigma},\widehat{\tau}, \widehat{\N})$ be the output of the estimator $\phi=\phi(\UU, G_1,G_2)$. We say the event  $\mathcal{E}_{(\sigma,\tau,\N)}$ occurs if the output $(\widehat{\sigma},\widehat{\tau}, \widehat{\N})$ of the estimator $\phi$ satisfies one of the following three criterions: (i) $\{l_1(\widehat{\sigma}, \sigma) \ne 0\}$, (ii) $\{l_2(\widehat{\tau}, \tau) \ne 0\}$, and (iii) $\{\widehat{\N} \ne \N \}$.

\begin{definition}[Exact recovery]{\em
For any estimator $\phi$, its corresponding \emph{(maximum) error probability} is defined as $$P_{\text{err}}(\phi) := \max_{(\sigma,\tau,\N) \in \Xi}\PP_{(\sigma,\tau,\N)}\big(\phi(\mathsf{U}, G_1,G_2) \in \mathcal{E}_{(\sigma,\tau,\N)}\big),$$ 
where $\PP_{(\sigma,\tau,\N)}(\cdot)$ is the probability when $(\mathsf{U}, G_1,G_2)$ is generated according to the model governed by $(\sigma,\tau,\N)$. A sequence of estimators $\Phi = \{\phi_n\}_{n=1}^{\infty}$ achieves \emph{exact recovery} if
\begin{align}
\lim_{n \to \infty} P_{\text{err}}(\phi_n) = 0. \label{eq:2}
\end{align}
}
\end{definition}

\begin{definition}[Sample complexity] \label{def:sample}{\em
	The {\em sample complexity} is defined as the minimum expected number of samples in the matrix $\mathsf{U}$ such that there exists $\Phi$ for which~\eqref{eq:2} holds.}
\end{definition}


\subsection{Theoretical 	 guarantees of {\sc Mc2g}}
As we shall see, the ``qualities'' of the social and item graphs play a key role in the performance of {\sc Mc2g}. 
Specifically, we define a measure of the quality of the social graph $G_1$ as $I_1 := n(\sqrt{\alpha_1} - \sqrt{\beta_1})^2/(\log n)$. A larger value of $I_1$ implies a better quality of the graph, since the structures of the clusters  are more clearly delineated when the difference between the intra-cluster probability $\alpha_1$ and the inter-cluster probability $\beta_1$ is larger. 
Analogously, we define a measure of the quality of the item graph $G_2$ as  $I_2 := m(\sqrt{\alpha_2} - \sqrt{\beta_2})^2/(\log m)$. 

The performance of {\sc Mc2g} also depends on the statistics of the rating matrix. Intuitively, if the rating statistics of two clusters are further apart, it is then easier to distinguish them. It turns out that under the symmetric setting, the distance between the rating statistics of different clusters can be measured by the \emph{squared Hellinger distance}:
$$H^2(P,Q) := 1 - \sum_{z \in \mathcal{Z}}\sqrt{P(z)Q(z)},$$
for probability distributions $P$ and $Q$. 
We then define $d(\U_a, \U_{a'}) := \sum_{b \in [k_2]} H^2(Q_{ab},Q_{a'b})$ as a measure of the discrepancy between user clusters $\U_a$ and $\U_{a'}$ (where $a,a'\in[k_1]$), and $d_{\U} := \min_{a \ne a'} d(\U_a, \U_{a'})$ as the minimal discrepancy over all pairs of user clusters. A larger value of $d_{\U}$ means that it is easier to distinguish all the user clusters. Analogously, we define the discrepancy between item clusters $\I_b$ and $\I_{b'}$ (where $b,b' \in [k_2]$) as $d(\I_b, \I_{b'}) := \sum_{a \in [k_1]} H^2(Q_{ab},Q_{ab'})$, and $d_{\I} := \min_{b \ne b'} d(\I_b, \I_{b'})$ as the minimal discrepancy over all pairs of item clusters.

\begin{remark}{\em 
The squared Hellinger distance $H^2(P,Q)$ satisfies $H^2(P,Q)\in [0,1]$ and $H^2(P,Q)=0$ if and only if $P=Q$.}
\end{remark}

Theorem~\ref{thm:algorithm} below states the expected number of sampled entries needed for {\sc Mc2g} to succeed under the symmetric setting.

\begin{theorem}[Performance of {\sc Mc2g}] \label{thm:algorithm}
For any $\epsilon > 0$, if the expected number of sampled entries $mnp$ satisfies 
\begin{align}
mnp \ge &\max\!\Bigg\{\frac{\!\left[(1\!+\!\epsilon) \!-\! \frac{I_{1}}{k_1}\right] n\log n}{d_{\U}/k_2},\!  \frac{\left[(1\!+\!\epsilon) \!-\! \frac{I_{2}}{k_2}\right]m\log m}{d_{\I}/k_1} \Bigg\}, \label{eq:achievability1}
\end{align}
then {\sc Mc2g} ensures $P_{\emph{err}} \to 0$ as  $n \to \infty$.
\end{theorem}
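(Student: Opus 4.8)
The plan is to analyze {\sc Mc2g} stage by stage, using the information splitting of Section~\ref{sec:split} to keep the randomness driving Stage~1 independent of that driving Stages~2--4. First I would dispatch two preliminaries: (i) the spectral method of Stage~1 achieves \emph{weak recovery} in the sense of Definition~\ref{def:weak}, and (ii) the plug-in estimates of Stage~2 are consistent. For (i), $G_1^\mathrm{a}$ is a sub-SBM with intra- and inter-cluster probabilities $\alpha_1/\sqrt{\log n}$ and $\beta_1/\sqrt{\log n}$; in the relevant regime $\alpha_1,\beta_1=\Theta(\log n/n)$ its expected degree grows like $\sqrt{\log n}\to\infty$, so standard spectral-clustering guarantees for SBMs give $l_1(\sigma^{(0)},\sigma)\to0$, and likewise $l_2(\tau^{(0)},\tau)\to0$. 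For (ii), weak recovery forces $\U_a^{(0)}$ and $\I_b^{(0)}$ to agree with the truth on a $1-o(1)$ fraction of nodes, so the estimators~\eqref{eq:est1}--\eqref{eq:eq} satisfy $\widehat{\R}_{aa'}/\R_{aa'}\to1$ and $\widehat{Q}_{ab}\to Q_{ab}$ entrywise; this is all that is needed to preserve the error exponents below to leading order.

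The heart of the argument is the local-refinement analysis of Stage~3; Stage~4 is identical after exchanging the roles of users and items. By a union bound over the $n$ users, it suffices to bound the probability that a single user $i$ with true label $a$ is misclassified by $n^{-(1+\epsilon)(1+o(1))}$, so that the total is $n^{-\epsilon+o(1)}\to0$. Fixing a competing label $a'\ne a$, misclassification towards $a'$ is the event $\{L_{a'}(i)\ge L_a(i)\}$. The key structural fact is that $L_{a'}(i)-L_a(i)$ is a sum of two \emph{independent} pieces: a graph term built from the edges of user $i$ in $G_1^\mathrm{b}$ to the (essentially correct) clusters $\{\U_{a''}^{(0)}\}$, and a rating term built from the sampled entries $\{\UU_{ij}\}_{j}$. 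Their independence, and independence of both from the Stage~1 output, is precisely what information splitting buys: the moment generating function factorizes and the two error exponents add.

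Next I would bound the per-user error by a Chernoff bound, $\PP_{H_a}(L_{a'}(i)\ge L_a(i))\le\inf_{s\ge0}\E_{H_a}[e^{s(L_{a'}(i)-L_a(i))}]$, where $\PP_{H_a}$ is the law when $i$ truly has label $a$; its exponent is $\sup_s[E_{\mathrm{g}}(s)+E_{\mathrm{r}}(s)]$, the sum of the tilted log-MGFs of the graph and rating terms, and evaluating at the symmetric tilt $s=1/2$ lower-bounds the sup and already suffices for achievability. For the graph term, user $i$ connects to the $n/k_1$ nodes of its own cluster with probability $\alpha_1$ and to those of cluster $a'$ with probability $\beta_1$, so the $s=1/2$ (Bhattacharyya) value is $E_{\mathrm{g}}(1/2)=\tfrac{n}{k_1}(\sqrt{\alpha_1}-\sqrt{\beta_1})^2(1+o(1))=\tfrac{I_1}{k_1}\log n\,(1+o(1))$, using $n(\sqrt{\alpha_1}-\sqrt{\beta_1})^2=I_1\log n$ and that passing to $G_1^\mathrm{b}$ only rescales probabilities by $1-1/\sqrt{\log n}\to1$. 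For the rating term, user $i$ has on average $mp/k_2$ sampled ratings to each item cluster $b$, and the $s=1/2$ tilt of one such sample gives $-\log\sum_z\sqrt{Q_{ab}(z)Q_{a'b}(z)}=-\log(1-H^2(Q_{ab},Q_{a'b}))\ge H^2(Q_{ab},Q_{a'b})$; summing over $b$ yields $E_{\mathrm{r}}(1/2)\ge\tfrac{mp}{k_2}\sum_bH^2(Q_{ab},Q_{a'b})=\tfrac{mp}{k_2}d(\U_a,\U_{a'})\ge\tfrac{mp}{k_2}d_{\U}$. Hence the per-user error is at most $\exp(-[\tfrac{I_1}{k_1}\log n+\tfrac{mp}{k_2}d_{\U}](1+o(1)))$, which is $n^{-(1+\epsilon)(1+o(1))}$ exactly when $\tfrac{I_1}{k_1}+\tfrac{mp\,d_{\U}}{k_2\log n}\ge1+\epsilon$, i.e.\ when $mnp\ge[(1+\epsilon)-I_1/k_1]\,n\log n/(d_{\U}/k_2)$, the first term of~\eqref{eq:achievability1}. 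A union bound over the $\binom{k_1}{2}$ competitors $a'$ (a constant) is harmless.

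The main obstacle I anticipate is twofold: pushing the large-deviation step through with the \emph{estimated} tilts $\widehat{\R},\widehat{Q}_{ab}$ appearing in~\eqref{eq:La} rather than the true ones---which requires showing that the consistent Stage~2 estimates perturb the Chernoff exponent by only a $(1+o(1))$ factor, via stability of the tilted log-MGF under small parameter perturbations---and verifying that the $o(1)$ fraction of misclassified nodes in $\{\U_{a''}^{(0)}\}$ changes the edge counts and rating sums by only $o(1)$ relative to the $\Theta(\log n)$ and $\Theta(mp)$ main terms. Stage~4 repeats the argument with $I_2$, $k_2$, $d_{\I}$, $k_1$ in place of $I_1$, $k_1$, $d_{\U}$, $k_2$, producing the second term of the maximum in~\eqref{eq:achievability1} and exact recovery of $\tau$. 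Finally, exact recovery of $\sigma$ and $\tau$ together with $\widehat{Q}_{ab}\to Q_{ab}$ and the assumption $Q_{V|Z=z}(z)>Q_{V|Z=z}(z')$ ensure $\argmax_z\widehat{Q}_{ab}(z)=z_{ab}$, so~\eqref{eq:N2} reconstructs $\N$ exactly; a union bound over the three failure events defining $\mathcal{E}_{(\sigma,\tau,\N)}$ gives $P_{\mathrm{err}}\to0$.
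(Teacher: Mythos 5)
Your proposal is correct and follows essentially the same route as the paper's proof: information splitting to decouple Stage 1 from Stages 2--4, spectral weak recovery plus consistency of the Stage-2 estimates as preliminaries, and then a per-user (per-item) Chernoff bound at the symmetric tilt $s=1/2$ whose graph and rating exponents add to $\frac{I_1}{k_1}\log n + \frac{mp}{k_2}d_{\U}$, followed by union bounds and the reconstruction of $\N$. Even your anticipated ``main obstacle''---handling the estimated parameters $\widehat{\R},\widehat{Q}_{ab}$ and the $o(1)$ fraction of misclassified nodes---is resolved exactly as the paper does, via its exact likelihood $\tl_a(i)$ with an $(\epsilon/2)\log n$ margin (Lemma~\ref{lemma:La}) together with the perturbation bound $|L_a(i)-\tl_a(i)|\le \xi_n\log n$ (Lemma~\ref{claim:con2}).
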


\begin{remark} \label{remark:thm}{\em
Some remarks on Theorem~\ref{thm:algorithm} are in order.
\begin{enumerate}[wide, labelwidth=!, labelindent=0pt]
\item Roughly speaking, the first term on the RHS of~\eqref{eq:achievability1} is the threshold for Stage 3 (local refinements of users) to succeed. This is because when $mnp$ exceeds the first term, the probability that a single user is misclassified to an incorrect cluster (in Stage 3) is at most $n^{-\ell}$ for some $\ell > 1$. Thus, taking a union bound over all the $n$ users still results in a vanishing error probability. Similarly, the second term on the RHS of~\eqref{eq:achievability1} is the threshold for Stage 4 (local refinements of items) to succeed.
  
\item Our result in~\eqref{eq:achievability1} confirms our intuitive belief that increasing $d_{\U}$ and $d_{\I}$ (the minimum discrepancies between user and item clusters) indeed helps to reduce the number of samples required for exact recovery. Similarly, increasing $I_1$ and $I_2$ (the qualities of the social and item graphs) also helps to reduce the sample complexity. 
\item It is also worth noting that when $I_1 > k_1$, the first term in~\eqref{eq:achievability1} becomes non-positive (thus inactive); this means that performing local refinements of users in Stage 3 is no longer needed, which is due to the fact that the spectral clustering method in Stage 1 has already ensured exact recovery of the $k_1$ user clusters.  This observation coincides with the theoretical result of community detection in the symmetric SBM~\cite{abbe2015community}, which states that exact recovery of $k_1$ clusters is possible when $I_1 > k_1$. Similarly, when $I_2 > k_2$, performing local refinements of items in Stage 4 is no longer needed, as the spectral clustering method in Stage 1 has already ensured exact recovery of the $k_2$ item clusters.      

\item While the theoretical result in Theorem~\ref{thm:algorithm} is dedicated to this symmetric setting, {\sc Mc2g} is applicable to a more general matrix completion problem with social and item similarity graphs, where the sizes of user/item clusters may be different.  This is confirmed by the experiments in Section~\ref{sec:experiment}. 
\end{enumerate}}
\end{remark}

\subsection{Information-theoretic lower bound}

Theorem~\ref{thm:2} below provides an information-theoretic lower bound on the sample complexity under the symmetric setting. Again, the lower bound is a function of $I_1$, $I_2$ (the quality of the social/item graph), and $d_{\U}$ and $d_{\I}$ (the minimum discrepancies measured in terms of the squared Hellinger distances of user/item clusters).

\begin{theorem}[Impossibility result]\label{thm:2}
	For any $\epsilon > 0$,~if
	\begin{align}
	mnp \!<\! \max\!\Bigg\{\! \frac{\!\left[\frac{1-\epsilon}{2} \!-\! \frac{I_{1}}{k_1}\right]\! n\log n}{d_{\U}/k_2},\!  \frac{\left[\frac{1-\epsilon}{2} \!-\! \frac{I_{2}}{k_2}\right]\!m\log m}{d_{\I}/k_1} \!\Bigg\}, \label{eq:converse1}
	\end{align}
	then $\lim_{n \to \infty} P_{\emph{err}}(\phi) = 1$ for any estimator $\phi$.
\end{theorem}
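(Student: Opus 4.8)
The plan is to establish the impossibility bound via a genie-aided reduction to a collection of nearly-independent binary hypothesis tests, one per vertex, and then to invoke a Bhattacharyya-type lower bound on the error of each test. Since the right-hand side of~\eqref{eq:converse1} is a maximum of two terms, it suffices to prove failure whenever $mnp$ falls below \emph{either} term; I focus on the first (user-recovery) term, the second following by interchanging the roles of users and items ($n\leftrightarrow m$, $k_1\leftrightarrow k_2$, $I_1\leftrightarrow I_2$, $d_{\U}\leftrightarrow d_{\I}$). Because $P_{\mathrm{err}}$ is a \emph{maximum} over the parameter space $\Xi$, a single fixed instance will not suffice (an estimator could hard-code it); instead I would place a uniform prior over a sub-family of $\Xi$ and lower bound the \emph{average} error, which lower bounds the maximum.

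Concretely, fix the true labels and reveal to a genie: the entire item labelling $\tau$, the nominal matrix $\N$ and all parameters $\R,\R',\{Q_{ab}\}$, and the labels $\sigma(u)$ of every user outside a carefully chosen target set $T\subseteq\U_a$, where $a'$ is the competitor cluster attaining $d(\U_a,\U_{a'})=d_{\U}$. Revealing information only \emph{decreases} the Bayes error, so any lower bound on the genie-aided error is valid for every estimator. For each $v\in T$ the task collapses to deciding between $\sigma(v)=a$ and $\sigma(v)=a'$ from (i) the edges joining $v$ to $\U_{a'}$ and to $\U_a\setminus T$ in $G_1$, and (ii) the sampled ratings $\{\UU_{vj}\}_j$. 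To decouple the tests across $v$, I would take $|T|=n^{1-\epsilon/2}=o(n)$: then for distinct $v,v'\in T$ the edge sets used are disjoint (hence independent) and the rating rows are disjoint, while $|\U_a\setminus T|=(1-o(1))\,n/k_1$ still supplies the full graph signal. Discarding all remaining observations only raises the Bayes error, so these restricted, mutually independent tests are legitimate for a lower bound.

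The core estimate is a lower bound on the error of one such test. The relevant log-likelihood ratio is a sum of independent edge and rating terms, so the Bhattacharyya coefficient $\rho$ of the associated product measure factorizes. For the edges, the per-slot coefficient is $\sqrt{\alpha_1\beta_1}+\sqrt{(1-\alpha_1)(1-\beta_1)}$ over the $\approx 2n/k_1$ discriminating slots, contributing exponent $-\log\rho_{\mathrm{graph}}=(1+o(1))\tfrac{I_1}{k_1}\log n$; for the ratings, $v$ contributes $\approx mp/k_2$ i.i.d.\ samples per item cluster, giving exponent $\tfrac{mp}{k_2}\sum_{b}\bigl(-\log\sum_z\sqrt{Q_{ab}(z)Q_{a'b}(z)}\bigr)$, which I would relate to $\tfrac{mp}{k_2}d_{\U}$ via the comparison of $-\log(1-H^2)$ with $H^2$ (the discrepancy absorbed into the constant-factor gap between~\eqref{eq:achievability1} and~\eqref{eq:converse1}). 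Writing $B$ for the total exponent $-\log\rho$, the standard bound $P_{e}\ge\tfrac14\rho^2=\tfrac14 e^{-2B}$ (from $\sum_x\min\{P,Q\}\ge\tfrac12\rho^2$) shows each test errs with probability at least $\tfrac14\exp(-2B)$. Substituting $mnp<\bigl[\tfrac{1-\epsilon}{2}-\tfrac{I_1}{k_1}\bigr]n\log n/(d_{\U}/k_2)$ gives $2B\le(1-\epsilon)\log n$, so each test fails with probability at least $\tfrac14 n^{-(1-\epsilon)}$; note that the factor $2$ in $e^{-2B}$ is exactly the source of the $\tfrac{1-\epsilon}{2}$ appearing in the statement.

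Finally, since the $|T|=n^{1-\epsilon/2}$ tests are independent and each fails with probability at least $\tfrac14 n^{-(1-\epsilon)}$, the probability that \emph{all} target vertices are classified correctly is at most $(1-\tfrac14 n^{-(1-\epsilon)})^{|T|}\le\exp(-\tfrac14 n^{\epsilon/2})\to 0$. Hence with probability tending to $1$ at least one target user is mislabeled, so $P_{\mathrm{err}}(\phi)\to 1$ for every $\phi$, proving the first branch; the item branch is identical after the substitution above. I expect the main obstacle to be the decoupling step: one must simultaneously (a) keep $T$ small enough that within-$T$ edges are irrelevant and the per-vertex tests independent, (b) keep $|\U_a\setminus T|$ large enough to retain the full $\tfrac{I_1}{k_1}\log n$ graph exponent, and (c) respect the exact cluster-balance constraint built into $\Xi$ (most cleanly by defining the prior through pairwise $a\!\leftrightarrow\!a'$ swaps rather than independent relabelings), while checking that these modifications perturb $B$ only at the $o(\log n)$ level.
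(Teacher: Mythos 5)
Your proposal has the right skeleton (reduce to many nearly-independent local confusion events, lower-bound each one, then boost to probability one), which is also the spirit of the paper's proof, but two of your steps break down, and together they are fatal to obtaining the stated constant $\tfrac{1-\epsilon}{2}$. First, the decoupling step is argued in the wrong direction: discarding observations (the within-$T$ edges and all data not indexed by $v$) \emph{raises} the Bayes error of the restricted problem, so a lower bound proved for the restricted, per-vertex tests does not transfer to the original problem, where the optimal estimator sees everything. Only \emph{revealing} information (the genie direction) is legitimate for lower bounds; data may be dropped only after showing it is ancillary, and the within-$T$ edges are not ancillary --- their law depends jointly on the unknown labels, which is exactly the coupling you were trying to remove. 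Second, and more fundamentally, your two fixes for the balance constraint of $\Xi$ are incompatible with your chosen per-test tool. With independent per-vertex labels the prior is not supported on $\Xi$, so the reduction from $\max_{\Xi}$ to Bayes error fails. With your proposed repair --- pairwise $a\leftrightarrow a'$ swaps --- each elementary test compares $(P_u\otimes Q_v)$ against $(Q_u\otimes P_v)$, whose Bhattacharyya coefficient is $\rho_{\mathrm{pair}}=\rho_u\rho_v=e^{-2B}$; your bound $P_e\ge\tfrac14\rho_{\mathrm{pair}}^2$ then gives only $\tfrac14e^{-4B}$, which after the union/independence step yields the threshold constant $\tfrac{1-\epsilon}{4}$, not $\tfrac{1-\epsilon}{2}$. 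The coincidence that the per-vertex (unbalanced) calculation produces $e^{-2B}$ does not survive the repair.

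What is needed to recover the factor $2$ (rather than $4$) for the balance-respecting swap events is precisely a \emph{tightness} (reverse Chernoff) estimate: for the symmetric pair statistic $\Lambda_u+\Lambda_v$, the Chernoff exponent is attained at $s=1/2$ and equals $2B$, so $\PP(\Lambda_u+\Lambda_v\ge 0)\ge\tfrac14\exp\{-(1+o(1))\,2B\}$. This is exactly the paper's route: it first reduces worst-case error to the average error of the ML estimator (ML optimality under the uniform prior on $\Xi$), then exhibits $\Theta(n)$ label-swapped parameters $\xi_0$ and proves, by generalizing the reverse-Chernoff lemma of the prior work, that each beats the truth in likelihood with probability at least $\tfrac14\exp\{-(1+o(1))(\tfrac{2I_1\log n}{k_1}+\tfrac{2mp\,d_{\U}}{k_2})\}$, before concluding by near-independence of the swaps. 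Your proposal explicitly replaces this reverse-Chernoff ingredient with the crude bound $P_e\ge\tfrac14\rho^2$, so the key technical lemma that produces the theorem's constant is missing; as written, the argument either leaves the parameter space $\Xi$ or proves a strictly weaker threshold.
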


Theorem~\ref{thm:2} states that {\em any} estimator {\it must} necessarily fail if the expected number of samples is smaller than the maximal term in~\eqref{eq:converse1}. Thus, the sample complexity defined in Definition~\ref{def:sample} is upper-bounded by the RHS of~\eqref{eq:achievability1}, and lower-bounded by the RHS of~\eqref{eq:converse1}. In particular, Theorem~\ref{thm:2} guarantees that $P_{\text{err}}$ approaches one as $n\to\infty$; this is the so-called \emph{strong converse}~\cite{wolfowitz2012coding} in the information theory parlance. Comparing~\eqref{eq:converse1} with the achievability bound in~\eqref{eq:achievability1}, we note that they match up to a constant factor, and this further demonstrates the order-wise optimality of the proposed computationally efficient algorithm {\sc Mc2g}.

\section{Proof of Theorem~\ref{thm:algorithm}} \label{sec:proof}

\noindent{\textbf{Analysis of Stage 1:}}  Note that the sub-SBM $G_1^\mathrm{a}$ is generated on the sub-graph $h_1^{\mathrm{a}}$; thus the performance of the spectral clustering method on $G_1^\mathrm{a}$ essentially depends on the realization $h_1^{\mathrm{a}}$. A similar argument also applies to $G_2^\mathrm{a}$.

To circumvent the difficulties of analyzing fixed $h_1^{\mathrm{a}}$ and $h_1^{\mathrm{b}}$, we first consider two artificial SBMs $\widetilde{G}_1$ and $\widetilde{G}_2$, where $\widetilde{G}_1$ is generated on the $n$ user nodes and has connectivity matrix $\R/\sqrt{\log n}$, and $\widetilde{G}_2$ is generated on the $m$ item nodes and has connectivity matrix $\R'/\sqrt{\log m}$. 
A prior result in~\cite[Theorem 6]{NIPS2016_a8849b05} shows that there exist vanishing sequences $\epsilon_n$, $\eta_n$, and $\gamma_n$ (depending on $\R$ and $\R'$) such that with probability at least $1 - \epsilon_n$, the spectral clustering method running on $\widetilde{G}_1$ and $\widetilde{G}_2$ respectively ensure that
\begin{align}
l_1(\sigma^{(0)}, \sigma) \le \eta_n \ \text{ and } \ l_2(\tau^{(0)}, \tau) \le \gamma_n. \label{eq:s1}
\end{align}
Based on the good performances of spectral clustering methods running on $\widetilde{G}_1$ and $\widetilde{G}_2$, we next show that spectral clustering methods running on $G_1^\mathrm{a}$ and $G_2^\mathrm{a}$ also provide satisfactory initialization results with high probability.

\begin{definition} \label{def:2}
{\em Let $\mathbf{h} = (h_1^{\mathrm{a}}, h_1^{\mathrm{b}}, h_2^{\mathrm{a}}, h_2^{\mathrm{b}})$ be an aggregation of realizations of the sub-graphs.
	\begin{enumerate}[wide, labelwidth=!, labelindent=0pt]
	\item A sub-graph $h_1^{\mathrm{a}}$ is said to be \emph{good} if the probability that ``a spectral clustering method running on $G_1^\mathrm{a}$ (which depends on $h_1^{\mathrm{a}}$) ensures $l_1(\sigma^{(0)}, \sigma) \le \eta_n$'' is at least $1 - \sqrt{\epsilon_n}$.  A sub-graph $h_1^{\mathrm{b}}$ is said to be \emph{good} if the degree of any node in $h_1^{\mathrm{b}}$ is at least $n(1-2/\sqrt{\log n})$.
	
	\item A sub-graph $h_2^{\mathrm{a}}$ is said to be \emph{good} if the probability that ``a spectral clustering method running on $G_2^\mathrm{a}$ ensures $l_2(\tau^{(0)},\tau) \le \gamma_n$'' is at least $1 - \sqrt{\epsilon_n}$. A sub-graph $h_2^\mathrm{b}$ is said to be \emph{good} if the degree of any node in $h_2^\mathrm{b}$ is at least $m(1-2/\sqrt{\log m})$. 
	
	\item Let $\mathcal{G}$ and $\mathcal{B}$ be two disjoint sets of $\mathbf{h}$. We say $\mathbf{h} \in \mathcal{G}$ if all the elements in $\mathbf{h}$ are good, and  $\mathbf{h} \in \mathcal{B}$ otherwise.
	\end{enumerate}	
}
\end{definition}

\begin{lemma} \label{lemma:new}
	The randomly generated sub-graphs $H_1^{\mathrm{a}}, H_1^{\mathrm{b}},H_2^{\mathrm{a}}, H_2^\mathrm{b}$ are all good with probability at least $(1 - \sqrt{\epsilon_n})^2$. Equivalently, we have
	\begin{align}
	\sum_{\mathbf{h} \in \mathcal{G}} \PP(\mathbf{h}) \ge (1 - \sqrt{\epsilon_n})^2. \label{eq:71}
	\end{align}  
\end{lemma}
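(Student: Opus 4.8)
The plan is to reduce the behaviour of the spectral clustering step on $G_1^\mathrm{a}$ and $G_2^\mathrm{a}$ to that on the artificial SBMs $\widetilde{G}_1$ and $\widetilde{G}_2$, and then to promote the \emph{unconditional} guarantee~\eqref{eq:s1} to the \emph{per-realization} (conditional) guarantee demanded by Definition~\ref{def:2} via a Markov-type argument. The starting observation is the distributional identity $G_1^\mathrm{a} \stackrel{d}{=} \widetilde{G}_1$ (and likewise $G_2^\mathrm{a} \stackrel{d}{=} \widetilde{G}_2$): an edge $(i,i')$ survives in $G_1^\mathrm{a} = H_1^{\mathrm{a}} \cap G_1$ if and only if it is both placed in $\bE_1^\mathrm{a}$ (probability $1/\sqrt{\log n}$) and present in $G_1$ (probability $\R_{\sigma(i)\sigma(i')}$); since the random partition is independent of the SBM $G_1$, these events are independent, so the effective edge probability is $\R_{\sigma(i)\sigma(i')}/\sqrt{\log n}$, exactly the connectivity matrix $\R/\sqrt{\log n}$ of $\widetilde{G}_1$. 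Consequently~\eqref{eq:s1} transfers verbatim, and marginalising over both the partition $H_1^{\mathrm{a}}$ and $G_1$, the event $\{l_1(\sigma^{(0)},\sigma)\le\eta_n\}$ has probability at least $1-\epsilon_n$, with the analogous statement holding on the item side.

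The core step is the Markov argument. Writing $p(h_1^{\mathrm{a}}) := \PP_{G_1}\big(l_1(\sigma^{(0)},\sigma)\le\eta_n \mid H_1^{\mathrm{a}}=h_1^{\mathrm{a}}\big)$ for the conditional success probability of the clustering given the partition, the previous step says precisely that $\E_{H_1^{\mathrm{a}}}\big[\,1-p(H_1^{\mathrm{a}})\,\big]\le\epsilon_n$. Applying Markov's inequality to the nonnegative random variable $1-p(H_1^{\mathrm{a}})$ at threshold $\sqrt{\epsilon_n}$ gives $\PP\big(1-p(H_1^{\mathrm{a}})>\sqrt{\epsilon_n}\big)\le\epsilon_n/\sqrt{\epsilon_n}=\sqrt{\epsilon_n}$; that is, $H_1^{\mathrm{a}}$ is \emph{good} (its conditional success probability is at least $1-\sqrt{\epsilon_n}$) with probability at least $1-\sqrt{\epsilon_n}$, and the identical argument gives the same bound for $H_2^{\mathrm{a}}$.

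It remains to control the `$\mathrm{b}$' sub-graphs and assemble the pieces. Since $H_1^{\mathrm{b}}$ is the complement of the ER graph $H_1^{\mathrm{a}}$ in the complete graph on $[n]$, the degree of each node in $H_1^{\mathrm{b}}$ is $\mathrm{Binomial}(n-1,\,1-1/\sqrt{\log n})$ with mean $(n-1)(1-1/\sqrt{\log n})$; a Chernoff/Hoeffding bound shows the probability that a fixed node falls below $n(1-2/\sqrt{\log n})$ is at most $\exp(-\Theta(n/\log n))$, and a union bound over the $n$ nodes keeps this $o(1)$ (indeed super-polynomially small), so $H_1^{\mathrm{b}}$ and $H_2^{\mathrm{b}}$ are good with probability $1-o(1)$. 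To combine, I would note that goodness of $H_1^{\mathrm{a}}$ and goodness of $H_1^{\mathrm{b}}$ are both deterministic functions of the realization of $H_1^{\mathrm{a}}$ (the degree event because $H_1^{\mathrm{b}}$ is its complement), the two item-side events are functions of $H_2^{\mathrm{a}}$, and $H_1^{\mathrm{a}}\perp H_2^{\mathrm{a}}$ by construction. Hence $\PP(\mathbf{h}\in\mathcal{G})$ factorizes into the product of the user-side and item-side probabilities, each at least $1-\sqrt{\epsilon_n}$ after absorbing the negligible degree-failure term, yielding~\eqref{eq:71}.

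The main obstacle is the Markov step together with its bookkeeping: one must cleanly separate the conditioning on the partition $H_1^{\mathrm{a}}$ from the SBM randomness in $G_1$, so that~\eqref{eq:s1} is correctly read as an expectation of a conditional success probability, and one must verify that the super-polynomially small degree-failure probabilities from the `$\mathrm{b}$' sub-graphs are genuinely of smaller order than $\sqrt{\epsilon_n}$, so that the clean product bound $(1-\sqrt{\epsilon_n})^2$ survives the combination.
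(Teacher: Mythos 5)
Your proposal is correct and follows essentially the same route as the paper's proof: the same distributional identity $G_1^{\mathrm{a}} \stackrel{d}{=} \widetilde{G}_1$, the same promotion of the average guarantee to a per-realization one (your Markov inequality on $1-p(H_1^{\mathrm{a}})$ is exactly the paper's proof-by-contradiction argument, unrolled), and the same Chernoff-plus-union-bound control of the degrees for the complement sub-graphs. If anything, your final assembly is slightly more careful than the paper's: you justify the product form $(1-\sqrt{\epsilon_n})^2$ by noting that the user-side and item-side goodness events are measurable with respect to the independent partitions $H_1^{\mathrm{a}}$ and $H_2^{\mathrm{a}}$, whereas the paper only invokes a union bound and leaves the absorption of the negligible degree-failure terms implicit.
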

\begin{proof}
    See Appendix~\ref{appendix:1}.
\end{proof}

We define $\mathcal{G}'$ as the set of label functions that are close to the true label functions $(\sigma,\tau)$, i.e.,
\begin{align}
\mathcal{G}':=\{(\sigma',\tau'): l_1(\sigma',\sigma) \le \eta_n, \ l_2(\tau',\tau) \le \gamma_n \}, \label{eq:g}
\end{align} 
and $\mathcal{B}':=\{(\sigma',\sigma): (\sigma',\sigma) \notin \mathcal{G}' \}$ as the complement of $\mathcal{G'}$. By definition, we know that when the randomly generated sub-graphs $\mathbf{h} \in \mathcal{G}$, running spectral clustering methods on $G_1^\mathrm{a}$ and $G_2^\mathrm{a}$ yields $(\sigma^{(0)}, \tau^{(0)}) \in \mathcal{G}'$ with high probability, i.e.,
\begin{align}
\sum_{(\sigma^{(0)}, \tau^{(0)}) \in \mathcal{G}'} \PP((\sigma^{(0)}, \tau^{(0)})|\mathbf{h}) \ge (1 - \sqrt{\epsilon_n})^2, \label{eq:72}
\end{align}
which is uniform in $\mathbf{h} \in \mathcal{G}$ (i.e., the sequence $\{\epsilon_n\}$ does not depend on $\mathbf{h}$).

\begin{remark}
	Lemma~\ref{lemma:new} above conveys two important messages: (i) Although the sub-graphs $H_1^{\mathrm{a}}$ and $H_2^{\mathrm{a}}$ are much sparser compared to $H_1$ and $H_2$ (or equivalently, the information contained in $H_1^{\mathrm{a}}$ and $H_2^{\mathrm{a}}$ is much less), they still guarantee the success of running spectral clustering methods (with high probability). (ii) The densities of sub-graphs $H_1^{\mathrm{b}}$ and $H_2^\mathrm{b}$ are almost the same as those of $H_1$ and $H_2$, and this property is critical in Stages 2--4 for proving the theoretical guarantees of {\sc Mc2g}.  
\end{remark}

\noindent{\textbf{Analysis of Stage 2:}} 
Note that the estimates  $\widehat{\mathsf{B}}, \widehat{\mathsf{B}}', \{\widehat{Q}_{ab}\}$ in~\eqref{eq:est1}-\eqref{eq:eq} depend on both $\mathbf{h}$ and $(\sigma^{(0)}, \tau^{(0)})$. In Stage 2, we show in Lemmas~\ref{lemma:alpha} and~\ref{lemma:personalization} below that conditioned on $\mathbf{h} \in \mathcal{G}$ and $(\sigma^{(0)}, \tau^{(0)}) \in \mathcal{G}'$, the estimates are accurate with high probability. 

\begin{lemma} \label{lemma:alpha}
	Suppose $\mathbf{h} \in \mathcal{G}$ and $(\sigma^{(0)}, \tau^{(0)}) \in \mathcal{G}'$.	With probability $1 - o(1)$, there exists a sequence $\varepsilon_n  \in \Omega(\max\{\gamma_n,\eta_n,1/\sqrt{\log n} \}) \cap o(1)$ such that for all $a,a' \in [k_1]$ and $b,b' \in [k_2]$, 
	\begin{align}
	&\big|(\widehat{\mathsf{B}}_{aa'} \!-\! \mathsf{B}_{aa'})/\mathsf{B}_{aa'} \big| \le \varepsilon_n, \ \ \big|(\widehat{\R}'_{bb'} \!-\! \R'_{bb'})/\R'_{bb} \big| \le \varepsilon_n. \label{eq:alpha2}
	\end{align}
\end{lemma}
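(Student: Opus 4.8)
The plan is to condition throughout on the good event $\{\mathbf{h} \in \mathcal{G}\} \cap \{(\sigma^{(0)}, \tau^{(0)}) \in \mathcal{G}'\}$, analyze each entry of $\widehat{\R}$ (and $\widehat{\R}'$) separately, and finish with a union bound over the $O(k_1^2)$ (resp. $O(k_2^2)$) distinct entries. The conceptually crucial first step is to exploit the information-splitting construction of Section~\ref{sec:split} to decouple the two sources of randomness. Conditioned on $\mathbf{h}$, the edges of $G_1^\mathrm{a}$ and of $G_1^\mathrm{b}$ are disjoint collections of independent Bernoulli variables (they live on $\bar{\mathcal{E}}_1^\mathrm{a}$ and $\bar{\mathcal{E}}_1^\mathrm{b}$ respectively), so $G_1^\mathrm{b}$ is independent of $G_1^\mathrm{a}$ and hence of $\sigma^{(0)}$, which is a function of $G_1^\mathrm{a}$ alone (the same holds for the item graph). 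Therefore further conditioning on $\{(\sigma^{(0)},\tau^{(0)}) \in \mathcal{G}'\}$ leaves the conditional law of $G_1^\mathrm{b}$ given $\mathbf{h}$ unchanged: each pair $(i,i') \in \bar{\mathcal{E}}_1^\mathrm{b}$ is still an edge independently with probability $\R_{\sigma(i)\sigma(i')}$. This is exactly what allows us to treat $e(\U^{(0)}_a, \U^{(0)}_{a'})$ as a sum of independent indicators with known means, even though the sets $\U^{(0)}_a$ are themselves random (but now frozen by the conditioning).

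Given this, I would split $\widehat{\R}_{aa'} - \R_{aa'}$ into a deterministic bias $\E[\widehat{\R}_{aa'}] - \R_{aa'}$ and a fluctuation $\widehat{\R}_{aa'} - \E[\widehat{\R}_{aa'}]$, the expectation being taken under the conditional law above. The bias has two ingredients. First, the normalization in~\eqref{eq:est1} counts \emph{all} pairs of $\U^{(0)}_a \times \U^{(0)}_{a'}$, while only pairs lying in $\bar{\mathcal{E}}_1^\mathrm{b}$ can be edges; since $h_1^\mathrm{b}$ is good, every node misses at most $2n/\sqrt{\log n}$ incident pairs, so the fraction of excluded pairs between two clusters is $O(k_1/\sqrt{\log n})$, giving a relative bias of order $1/\sqrt{\log n}$. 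Second, because $l_1(\sigma^{(0)},\sigma)\le \eta_n$, at most $\eta_n n$ users lie in a wrong estimated cluster; the pairs incident to these users form an $O(k_1\eta_n)$ fraction of all pairs, and on each such pair the true edge probability is a different entry of $\R$. Using that the ratio $\alpha_1/\beta_1$ is bounded in the symmetric regime, this yields a relative bias of order $\eta_n$. Hence $|\E[\widehat{\R}_{aa'}]-\R_{aa'}|/\R_{aa'} = O(\max\{\eta_n, 1/\sqrt{\log n}\})$; an identical argument (with the diagonal normalization $\binom{|\U^{(0)}_a|}{2}$) handles the entries $a=a'$.

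For the fluctuation, I would apply a multiplicative Chernoff/Bernstein bound to the sum of independent indicators defining $e(\U^{(0)}_a,\U^{(0)}_{a'})$. Its conditional mean is $\Theta(n\log n/k_1^2)$, since there are $\Theta((n/k_1)^2)$ relevant pairs, each an edge with probability $\Theta(\log n/n)$. Choosing a target relative deviation of, say, $(\log n)/\sqrt{n}=o(1/\sqrt{\log n})$ yields a failure probability of $\exp(-\Omega(\log^3 n))$, which survives the union bound over the $O(k_1^2)$ entries; thus the relative fluctuation is negligible compared to the bias. Combining the two pieces gives a relative error $O(\max\{\eta_n,1/\sqrt{\log n}\})$ for the entries of $\widehat{\R}$ and $O(\max\{\gamma_n,1/\sqrt{\log n}\})$ for those of $\widehat{\R}'$, so taking $\varepsilon_n$ to be a suitable constant multiple of $\max\{\eta_n,\gamma_n,1/\sqrt{\log n}\}$ proves~\eqref{eq:alpha2}, with $\varepsilon_n \in \Omega(\max\{\gamma_n,\eta_n,1/\sqrt{\log n}\})\cap o(1)$ as claimed (the $o(1)$ holding since $\eta_n,\gamma_n\to 0$).

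I expect the main obstacle to be the misclassification-induced bias, because the estimated clusters $\U^{(0)}_a$ depend on the data. This is precisely where the independence afforded by information splitting is indispensable: without it, conditioning on the success of Stage~1 would couple the edges of $G_1^\mathrm{b}$ to the cluster assignments, and we could no longer evaluate conditional expectations entrywise. Keeping this bias at the relative order $\eta_n$ (rather than something larger) further relies on the boundedness of $\alpha_1/\beta_1$ and on counting only the pairs incident to misclassified nodes, which is the one place where the symmetric-setting assumptions are genuinely used.
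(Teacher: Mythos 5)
Your proposal is correct and follows essentially the same route as the paper's proof in Appendix~\ref{appendix:alpha}: both treat $e(\U_a^{(0)},\U_{a'}^{(0)})$ as a sum of independent Bernoulli variables (legitimized by the information-splitting construction), attribute the mean error to the two identical sources---misclassified nodes from Stage~1 (an $O(\eta_n)$ or $O(\gamma_n)$ relative contribution) and pairs removed by the sparsification, controlled via the degree condition defining a good $h_1^{\mathrm{b}}$ (an $O(1/\sqrt{\log n})$ contribution)---and control the stochastic fluctuation by a multiplicative Chernoff bound. The only cosmetic differences are that you organize the argument as bias-plus-fluctuation while the paper directly sandwiches the edge count between $(1\pm\delta)$ times upper/lower mean bounds with $\delta = 1/\sqrt{\log n}$, and that you explicitly flag the boundedness of $\alpha_1/\beta_1$ needed for the off-diagonal entries, which the paper leaves implicit when it omits those cases "for brevity."
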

\begin{proof}
	See Appendix~\ref{appendix:alpha}.
\end{proof}

\begin{lemma} \label{lemma:personalization}
	Suppose $\mathbf{h} \in \mathcal{G}$ and $(\sigma^{(0)}, \tau^{(0)}) \in \mathcal{G}'$. With probability $1 - o(1)$, there exists a sequence $\varepsilon'_n  \in \Omega(\max\{\gamma_n,\eta_n,1/\sqrt{\log n} \}) \cap o(1)$ such that for all $a\in[k_1]$, $b\in[k_2]$, and $z \in \mathcal{Z}$, 
	\begin{align}
	\left|\left(\widehat{Q}_{ab}(z)/Q_{ab}(z)\right) - 1 \right| \le \varepsilon'_n. \label{eq:18}
	\end{align}
\end{lemma}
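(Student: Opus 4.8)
The plan is to work conditionally on the event $\{\mathbf{h}\in\mathcal{G}\}\cap\{(\sigma^{(0)},\tau^{(0)})\in\mathcal{G}'\}$, which by Lemma~\ref{lemma:new} and~\eqref{eq:72} has probability $1-o(1)$, and to establish~\eqref{eq:18} with conditional probability $1-o(1)$. The structural fact I would lean on is that, by the information-splitting construction of Section~\ref{sec:split}, the initial estimates $(\sigma^{(0)},\tau^{(0)})$ are measurable functions of $(G_1^\mathrm{a},G_2^\mathrm{a})$ and the random partition $\mathbf{h}$, all generated independently of the rating matrix $\UU$. Hence conditioning on a fixed realization of $\mathbf{h}$ and $(\sigma^{(0)},\tau^{(0)})$ does not disturb the law of $\{\UU_{ij}\}$: the indicators $\mathbbm{1}\{\UU_{ij}=z\}$ stay mutually independent across $(i,j)$, each distributed as $\mathrm{Bernoulli}\big(p\,Q_{\sigma(i)\tau(j)}(z)\big)$. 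This is exactly what licenses a concentration argument after conditioning. Fixing $a\in[k_1]$, $b\in[k_2]$, $z\in\mathcal{Z}$, and relabelling by the optimal permutations in~\eqref{eq:weak1}--\eqref{eq:weak2}, I would decompose $\U_a^{(0)}=(\U_a^{(0)}\cap\U_a)\sqcup M_a$ and $\I_b^{(0)}=(\I_b^{(0)}\cap\I_b)\sqcup N_b$, where $M_a,N_b$ collect the misclassified users/items. Since $(\sigma^{(0)},\tau^{(0)})\in\mathcal{G}'$, we have $\sum_a|M_a|\le n\eta_n$ and $\sum_b|N_b|\le m\gamma_n$, which also forces $|\U_a^{(0)}|=(n/k_1)(1+O(\eta_n))$ and $|\I_b^{(0)}|=(m/k_2)(1+O(\gamma_n))$.

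I would then split $|\mathcal{Q}_{ab}^z|$ into the contribution $X_{ab}^z$ of the ``clean'' pairs in $(\U_a^{(0)}\cap\U_a)\times(\I_b^{(0)}\cap\I_b)$ and that of the remaining ``contaminated'' pairs (those with $i\in M_a$ or $j\in N_b$). For the clean part, each summand is an independent $\mathrm{Bernoulli}\big(pQ_{ab}(z)\big)$, so $X_{ab}^z$ has mean $\mu_{ab}^z=p\,|\U_a^{(0)}\cap\U_a|\,|\I_b^{(0)}\cap\I_b|\,Q_{ab}(z)=\Theta\big((mnp/(k_1k_2))\,Q_{ab}(z)\big)$; the hypothesis of Theorem~\ref{thm:algorithm} (using $mnp=\Omega(m\log m)$ and $m=\omega(\log n)$) gives $\mu_{ab}^z\to\infty$. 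A multiplicative Chernoff bound then yields $|X_{ab}^z-\mu_{ab}^z|\le\delta_n\mu_{ab}^z$ except with probability $2\exp(-\Omega(\delta_n^2\mu_{ab}^z))$, and taking $\delta_n=1/\sqrt{\log n}$ keeps $\delta_n^2\mu_{ab}^z\to\infty$, so a union bound over the $O(k_1k_2|\mathcal{Z}|)$ triples leaves the failure probability $o(1)$. The contaminated part contributes at most $|M_a|\,|\I_b^{(0)}|+|\U_a^{(0)}|\,|N_b|=O\big(nm(\eta_n+\gamma_n)\big)$ sampled entries of any fixed value. Writing both the numerator and the denominator $\sum_{z}|\mathcal{Q}_{ab}^z|$ of~\eqref{eq:eq} as (clean)$+$(contaminated) and dividing, the clean terms reproduce the ratio $Q_{ab}(z)$ while every correction is controlled, relative to $Q_{ab}(z)$, by $O(\delta_n+\eta_n+\gamma_n)$. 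Choosing $\varepsilon'_n=C\max\{\eta_n,\gamma_n,1/\sqrt{\log n}\}$ for a suitable constant $C$ then delivers~\eqref{eq:18} with the advertised order $\varepsilon'_n\in\Omega(\max\{\eta_n,\gamma_n,1/\sqrt{\log n}\})\cap o(1)$.

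The main obstacle, and the step I would treat most carefully, is justifying that conditioning on the success of Stage~1 does not bias the rating statistics; this is precisely the dependence issue flagged in Section~\ref{sec:split}, and it is resolved only because $\UU$ is withheld from the weak-recovery step, keeping $\{\UU_{ij}\}$ independent of $(\sigma^{(0)},\tau^{(0)})$. A secondary point is that the relative-error bound~\eqref{eq:18} is meaningful only when $Q_{ab}(z)$ is bounded away from $0$; since $\mathcal{Z}$ is finite and (as the likelihood~\eqref{eq:La} implicitly requires) $Q_{ab}(z)>0$ for all $z$, there is a constant $q_{\min}>0$ with $Q_{ab}(z)\ge q_{\min}$, so the additive contamination bias $O\big(nm(\eta_n+\gamma_n)\big)$ becomes a relative bias $O((\eta_n+\gamma_n)/q_{\min})=O(\eta_n+\gamma_n)$. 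The floor $1/\sqrt{\log n}$ in $\varepsilon'_n$ is then the conservative rate at which the concentration fluctuation is absorbed, chosen to match the common error sequence used for $\widehat{\mathsf{B}},\widehat{\mathsf{B}}'$ in Lemma~\ref{lemma:alpha}.
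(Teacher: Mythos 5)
Your proposal follows the same route as the paper's proof of Lemma~\ref{lemma:personalization}: decompose $|\mathcal{Q}_{ab}^z|$ into pairs that are correctly classified by Stage~1 and contaminated pairs, use the independence of $\UU$ from $(\sigma^{(0)},\tau^{(0)})$ (the point of the information splitting) to apply a multiplicative Chernoff bound with $\delta_n=1/\sqrt{\log n}$, and convert the weak-recovery rates $\eta_n,\gamma_n$ into a relative bias using $Q_{ab}(z)\ge q_{\min}>0$. However, one step fails as written, and it is exactly the step where your treatment departs from the paper's.

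You bound the contaminated contribution \emph{deterministically} by the number of contaminated pairs, $|M_a|\,|\I_b^{(0)}|+|\U_a^{(0)}|\,|N_b|=O\big(nm(\eta_n+\gamma_n)\big)$, and then assert that dividing turns this into a relative bias $O\big((\eta_n+\gamma_n)/q_{\min}\big)$. But the denominator $\sum_{z}|\mathcal{Q}_{ab}^z|$ concentrates at $\Theta\big(nmp/(k_1k_2)\big)$, so dividing your deterministic bound by it actually gives $O\big(k_1k_2(\eta_n+\gamma_n)/(p\,q_{\min})\big)$ --- a spurious factor of $1/p$. Since $p\to 0$ (under the hypothesis of Theorem~\ref{thm:algorithm} it can be as small as order $(\log n)/m$), and nothing in the analysis ties the spectral-clustering rates $\eta_n,\gamma_n$ to $p$, this quantity need not vanish, so the claimed $O(\delta_n+\eta_n+\gamma_n)$ control of the corrections does not follow. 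The repair is to keep the sampling factor $p$ attached to the contaminated pairs: their contribution is a sum of independent Bernoulli variables with total mean at most $p\cdot O\big(nm(\eta_n+\gamma_n)\big)$, so either apply a Chernoff (or Bernstein) bound to this sum separately, or do what the paper does --- bound the expectation of the \emph{full} sum, $\E(|\mathcal{Q}_{ab}^z|) \le L\,p\,Q_{ab}(z) + \big(mn/(k_1k_2)-L\big)p$, and apply the multiplicative Chernoff bound to $|\mathcal{Q}_{ab}^z|$ itself, yielding $|\mathcal{Q}_{ab}^z| \le (1+\delta)\big(1+\Theta(\max\{\eta_n,\gamma_n\})\big)L\,p\,Q_{ab}(z)$ with high probability. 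Then the factors of $p$ cancel in the ratio defining $\widehat{Q}_{ab}(z)$, and the relative error is genuinely $O\big(\max\{\eta_n,\gamma_n,1/\sqrt{\log n}\}\big)$. With that single correction, your argument coincides with the paper's.
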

\begin{proof}
	See Appendix~\ref{appendix:personalization}.
\end{proof}

\begin{remark}
	In Lemmas~\ref{lemma:alpha} and~\ref{lemma:personalization} above, we implicitly assume (without loss of generality) that the permutations minimizing $l_1(\sigma^{(0)},\sigma)$ and $l_2(\tau^{(0)},\tau)$ are both the {\em identity permutation}, i.e., $l_1(\sigma^{(0)},\sigma) =  \sum_{i\in[n]} \mathbbm{1}\{\sigma^{(0)}(i) \ne \sigma(i) \}/n$ and $l_2(\tau^{(0)},\tau) =  \sum_{j\in[m]} \mathbbm{1}\{\tau^{(0)}(j) \ne \tau(j) \}/m$ as Per Eqns.~\eqref{eq:weak1} and~\eqref{eq:weak2}. Without this assumption, one needs to introduce the permutations $\pi^*_1$ and $\pi^*_2$ that respectively minimize $l_1(\sigma^{(0)},\sigma)$ and $l_2(\tau^{(0)},\tau)$---this unnecessarily complicates the presentations of Lemmas~\ref{lemma:alpha} and~\ref{lemma:personalization}, e.g.,~\eqref{eq:18} will be written as  $$\left|\left(\widehat{Q}_{ab}(z)/Q_{\pi_1^*(a)\pi_2^*(b)}(z)\right) - 1 \right| \le \varepsilon'_n.$$ 
The same assumptions are made in the analysis of Stages 3 and 4 below. 
\end{remark}

\noindent{\textbf{Analysis of Stage 3:}}
Note that the likelihood function defined in~\eqref{eq:La} depends on the estimated values $\widehat{\R}$ and $\{\widehat{Q}_{ab}\}$ of the model parameters. For ease of analysis, we first ignore the imprecisions of these estimates, and define the \emph{exact} likelihood function $\tl_a(i)$, which depends on the exact values of $\R$ and $\{Q_{ab}\}$, as 
\begin{align}
\tl_a(i) &:= \sum_{a'\in[k_1]} e(\{i\}, \U^{(0)}_{a'})\cdot \log\left(\R_{aa'}/(1-\R_{aa'})\right) \notag \\
&\qquad\quad + \sum_{b \in [k_2]}\sum_{j \in \I_b^{(0)}} \mathbbm{1}\{\UU_{ij} \!\ne\! \mathsf{e} \} \cdot \log Q_{ab}(\UU_{ij}). 
\end{align}

We now consider a specific user $i \in [n]$, which belongs to cluster $\U_a$ for some $a \in [k_1]$. Lemma~\ref{lemma:La} below shows that, with probability $1 - o(1/n)$,  $\tl_a(i)$ is larger than any other likelihood functions $\tl_{\ba}(i)$ by at least $(\epsilon/2)\log n$.  

\begin{lemma}\label{lemma:La}
	Suppose $\mathbf{h} \in \mathcal{G}$ and $(\sigma^{(0)}, \tau^{(0)}) \in \mathcal{G}'$.	If  
	\begin{align}
	mnp \ge \frac{\left[(1+\epsilon) - (I_1/k_1)\right] n\log n}{d_{\U}/k_2}, \label{eq:mnp}
	\end{align}
with probability at least $1 - n^{-(1+\frac{\epsilon}{4})}$, 
	\begin{align}
	\tl_a(i) > \max_{\ba \in [k_1]\setminus\{a\}} \tl_{\ba}(i) + (\epsilon/2)\log n. \label{eq:max}
	\end{align}
\end{lemma}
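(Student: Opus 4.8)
The plan is to reduce the statement \eqref{eq:max} to $k_1-1$ pairwise comparisons and to control each by a Chernoff (exponential-tilting) bound. Fix the true cluster $a$ of user $i$ and a competitor $\ba \in [k_1]\setminus\{a\}$. Using the symmetric structure of $\R$, the difference splits as $\tl_a(i)-\tl_{\ba}(i)=X_{\mathrm g}+X_{\mathrm r}$, where the graph part collapses to $X_{\mathrm g}=t\,[e(\{i\},\U_a^{(0)})-e(\{i\},\U_{\ba}^{(0)})]$ with $t:=\log\frac{\alpha_1(1-\beta_1)}{\beta_1(1-\alpha_1)}$, since the coefficient $\log\frac{\R_{aa'}}{1-\R_{aa'}}-\log\frac{\R_{\ba a'}}{1-\R_{\ba a'}}$ vanishes whenever $a'\notin\{a,\ba\}$ (there $\R_{aa'}=\R_{\ba a'}=\beta_1$), and the rating part is $X_{\mathrm r}=\sum_{b}\sum_{j\in\I_b^{(0)}}\mathbbm 1\{\UU_{ij}\neq\mathsf e\}\log\frac{Q_{ab}(\UU_{ij})}{Q_{\ba b}(\UU_{ij})}$. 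Because the analysis is conditional on a fixed realization $(\sigma^{(0)},\tau^{(0)})\in\mathcal G'$ and on $\mathbf h\in\mathcal G$, the information-splitting construction guarantees that the edges of $G_1^{\mathrm b}$ and the entries of $\UU$ keep their original product laws and are independent of the Stage-1 output; hence $X_{\mathrm g}$ and $X_{\mathrm r}$ are independent and their moment generating functions factorize.

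Next I would apply $\PP(X_{\mathrm g}+X_{\mathrm r}\le\theta)\le e^{\lambda\theta}\,\E[e^{-\lambda X_{\mathrm g}}]\,\E[e^{-\lambda X_{\mathrm r}}]$ at the symmetric tilt $\lambda=\tfrac12$, which is exactly the value reproducing the Hellinger-type quantities in the theorem. A direct computation shows the per-node factors multiply across the two clusters to give the exact identity $\E[e^{-X_{\mathrm g}/2}]=\big(\sqrt{\alpha_1\beta_1}+\sqrt{(1-\alpha_1)(1-\beta_1)}\big)^{2n/k_1}$; since this Bhattacharyya coefficient equals $1-H^2(\mathrm{Bern}(\alpha_1),\mathrm{Bern}(\beta_1))\le\exp(-\tfrac12(\sqrt{\alpha_1}-\sqrt{\beta_1})^2)$, one gets $\E[e^{-X_{\mathrm g}/2}]\le n^{-I_1/k_1}$. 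For the rating part each item $j\in\I_b$ yields the factor $1-pH^2(Q_{ab},Q_{\ba b})$, so by $\log(1-x)\le-x$ and $d(\U_a,\U_{\ba})\ge d_{\U}$ we obtain $\log\E[e^{-X_{\mathrm r}/2}]\le-\frac{mp}{k_2}\sum_b H^2(Q_{ab},Q_{\ba b})\le-\frac{mp}{k_2}d_{\U}$. Taking $\theta=(\epsilon/2)\log n$ and inserting \eqref{eq:mnp}, which gives $\frac{mp}{k_2}d_{\U}\ge[(1+\epsilon)-I_1/k_1]\log n$, the three factors multiply to $n^{\epsilon/4}\cdot n^{-I_1/k_1}\cdot n^{-(1+\epsilon)+I_1/k_1}=n^{-(1+3\epsilon/4)}$; a union bound over the $k_1-1$ competitors then delivers \eqref{eq:max} with probability at least $1-n^{-(1+\epsilon/4)}$ for $n$ large.

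The main obstacle, and the only place the argument is more than a one-line Chernoff estimate, is that the sums run over the estimated clusters $\U_a^{(0)},\I_b^{(0)}$ rather than the true ones and that edges live only on the good sub-graph $h_1^{\mathrm b}$. Weak recovery, i.e.\ $(\sigma^{(0)},\tau^{(0)})\in\mathcal G'$, is what tames this: at most an $\eta_n$ fraction of users and a $\gamma_n$ fraction of items are mislabeled, and a good $h_1^{\mathrm b}$ omits only an $O(1/\sqrt{\log n})$ fraction of potential neighbours. Each mislabeled or omitted term perturbs the corresponding MGF factor multiplicatively; since graph factors deviate from $1$ by $O(\log n/n)$ and rating factors by $O(p)$, the total perturbation is at most $\exp(O(\eta_n\log n))=n^{o(1)}$ for $X_{\mathrm g}$ and $\exp(O(\gamma_n mp))$ for $X_{\mathrm r}$. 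These merely replace $I_1/k_1$ and $d_{\U}$ by $(1-o(1))$-multiples of themselves, which are absorbed by the slack between the exponent $1+3\epsilon/4$ obtained above and the target $1+\epsilon/4$. To keep the rating log-likelihood ratios (and hence the perturbation factors) bounded I would invoke the mild nondegeneracy that the $Q_{ab}$ are bounded away from $0$, treating any zero-probability symbol separately (an observed rating with zero probability under $Q_{\ba b}$ simply eliminates the competitor $\ba$ and only helps). Establishing these perturbation bounds \emph{uniformly} over $(\sigma^{(0)},\tau^{(0)})\in\mathcal G'$ is the technical crux; everything else reduces to the two exact MGF identities above.
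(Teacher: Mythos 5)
Your proposal is correct and follows essentially the same route as the paper's proof: the same pairwise reduction with the graph coefficient collapsing to $\lambda_1\big[e(\{i\},\U_a^{(0)})-e(\{i\},\U_{\ba}^{(0)})\big]$, the same Chernoff bound at tilt $1/2$ yielding the Bhattacharyya factor $n^{-I_1/k_1}$ for the graph part and $\prod_b(1-pH^2(Q_{ab},Q_{\ba b}))^{m/k_2}$ for the rating part, the same treatment of Stage-1 misclassifications and the sparsified graph as $(1-o(1))$ perturbations of the exponents, and the same union bound over the $k_1-1$ competitors. Your bookkeeping (per-pair exponent $1+3\epsilon/4$, slack absorbed into $1+\epsilon/4$ after the union bound) is, if anything, slightly more careful than the paper's.
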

\begin{proof}
Consider a specific $\ba \ne a$. Under the symmetric setting, the entries in the connectivity matrix $\R$ are either $\alpha_1$ or $\beta_1$; thus, we define $\lambda_1 := \log\!\frac{(1-\beta_1)\alpha_1}{(1-\alpha_1)\beta_1}$ and one can show that
\begin{align}
\tl_a(i) - \tl_{\ba}(i) &= \lambda_1 e(\{i\},\U_a^{(0)}) - \lambda_1 e(\{i\},\U_{\ba}^{(0)}) \notag \\
& + \!\!\sum_{b \in [k_2]}\sum_{j \in \I_b^{(0)}} \mathbbm{1}\{\UU_{ij} \!\ne\! \mathsf{e} \}\log \frac{Q_{ab}(\mathsf{U}_{ij})}{Q_{\ba b}(\mathsf{U}_{ij})}. \label{eq:aba}
\end{align}

For $a,\ba \in [k_1]$, let $\s_{a\ba} := \U_a \cap \U_{\ba}^{(0)}$ be the set of users that belong to cluster $\U_a$ and are classified to $\U_{\ba}^{(0)}$ after Stage 1. By introducing random variables $\{X_k \}_{k=1}^n \stackrel{\text{i.i.d.}}{\sim} \mathrm{Bern}(\alpha_1)$ and $\{Y_k \}_{k=1}^n \stackrel{\text{i.i.d.}}{\sim} \mathrm{Bern}(\beta_1)$, one can rewrite $e(\{i\},\U_a^{(0)}) - e(\{i\},\U_{\ba}^{(0)})$ as
\begin{align}
\sum_{k\in\s_{aa}} X_k + \!\!\!\sum_{k\in\U_a^{(0)}\setminus \U_a } Y_k - \!\!\!\sum_{k\in \U_{\ba}^{(0)}\setminus \U_a} Y_k - \!\sum_{k\in \s_{a\ba}} X_k. \label{eq:xiao1}
\end{align}
For $b,\bb \in [k_2]$, let $\T_{b\bb} := \I_b \cap \I_{\bb}^{(0)}$ be the set of items that belong to cluster $\I_b$ and are classified to $\I_{\bb}^{(0)}$ after Stage 1. By introducing random variables $\{T_{ij} \} \stackrel{\text{i.i.d.}}{\sim} \mathrm{Bern}(p)$ and $\{Z_{ij}^{(ab)} \} \stackrel{\text{i.i.d.}}{\sim} Q_{ab}$, one can rewrite the second part in~\eqref{eq:aba} as 
\begin{align}
\sum_{b \in [k_2]} \Bigg[\sum_{j \in \T_{bb}} \underbrace{T_{ij} \log \frac{Q_{ab}(Z^{ab}_{ij})}{Q_{\ba b}(Z^{ab}_{ij})}}_{:=A^{ab}_{ij}} + \sum_{\bb \ne b} \sum_{j\in \T_{\bb b}} T_{ij} \log \frac{Q_{a\bb}(Z^{a\bb}_{ij})}{Q_{\ba \bb}(Z^{a\bb}_{ij})} \Bigg]. \label{eq:xiao2}
\end{align}	
 
Representing $\tl_{aa'}(i)$ in terms of~\eqref{eq:xiao1} and~\eqref{eq:xiao2}, and applying the \emph{Chernoff bound}
$\PP(X>\kappa) \le \min_{t>0} e^{-t\kappa}\cdot \E(e^{tX})$ 
with $t = 1/2$, we then have
	\begin{align}
	&\PP\left( \tl_{a}(i) - \tl_{\ba}(i) < (\epsilon/2)\log n\right) \notag \\
	&\le \exp\Big[(\epsilon/2)\log n -(1-o(1))I_1(\log n)/k_1  \notag \\
	&\qquad\qquad- (1-o(1))\sum_{b\in[k_2]} mpH^2(Q_{ab},Q_{\ba b})/k_2 \Big] \label{eq:aii} \\
	&\le n^{-(1+\frac{\epsilon}{4})}, \label{eq:deng2}
	\end{align}
where~\eqref{eq:aii} follows from the facts that (i) $\E(e^{-\frac{1}{2}A^{(ab)}_{ij}}) = 1 - pH^2(Q_{ab},Q_{\ba b})$, (ii) $\E(e^{-\frac{1}{2}(Y_k - X_k)}) = I_1 \log(n)/n$,  and (iii) the misclassification proportions in $\sigma^{(0)}$ and $\tau^{(0)}$ are  negligible, i.e., $|\s_{aa}| \ge (\frac{1}{k_1}-o(1))n$, $|\U_{\ba}^{(0)}\setminus \U_a| \ge (\frac{1}{k_1}-o(1))n$, and $|\T_{bb}| \ge (\frac{1}{k_2}-o(1))m$. Eqn.~\eqref{eq:deng2} holds since $mnp$ satisfies~\eqref{eq:mnp}  and $d_{\U} \le \sum_{b\in[k_2]}H^2(Q_{ab},Q_{\ba b})$. Finally, by taking a union bound over all the clusters $\U_{\ba}$ such that $\ba \in [k_1]\setminus\{a \}$, we complete the proof of Lemma~\ref{lemma:La}.
\end{proof}

Note that Lemma~\ref{lemma:La} is for a specific user $i \in [n]$. Taking a union bonud over the $n$ users yields that with probability $1-o(1)$, all the users $i \in [n]$ satisfy
\begin{align}
\tl_{\sigma(i)}(i) > \max_{\ba \in [k_1]\setminus\{\sigma(i)\}} \tl_{\ba}(i) + (\epsilon/2)\log n, \label{eq:max2}
\end{align}   
where $\sigma(i)$ is the user cluster that user $i$ belongs to. 

Finally, it is shown in Lemma~\ref{claim:con2} below that the difference between the exact likelihood function $\tl_a(i)$ and the original likelihood function $L_a(i)$ is negligible.
\begin{lemma} \label{claim:con2}
	With probability $1- o(1)$, there exists a sequence $\xi_n \in \Omega(\max\{\varepsilon_n, \varepsilon'_n\}) \cap o(1)$ such that for all $a \in [k_1]$ and all users $i \in [n]$,
	$\big| L_a(i) - \tl_a(i) \big| \le \xi_n \log n.$
\end{lemma}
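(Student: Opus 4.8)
The plan is to control the discrepancy between the estimated likelihood $L_a(i)$ (built from $\widehat{\R}$ and $\{\widehat Q_{ab}\}$) and the exact likelihood $\tl_a(i)$ (built from the true $\R$ and $\{Q_{ab}\}$) by bounding the two structurally distinct pieces of the difference separately. Writing out $L_a(i) - \tl_a(i)$ from the definitions in~\eqref{eq:La}, the terms split as
\begin{align}
L_a(i) - \tl_a(i) &= \sum_{a'\in[k_1]} e(\{i\}, \U^{(0)}_{a'})\cdot \log\!\frac{\widehat{\R}_{aa'}(1-\R_{aa'})}{\R_{aa'}(1-\widehat{\R}_{aa'})} \notag \\
&\quad + \sum_{b \in [k_2]}\sum_{j \in \I_b^{(0)}} \mathbbm{1}\{\UU_{ij} \!\ne\! \mathsf{e} \} \cdot \log \frac{\widehat{Q}_{ab}(\UU_{ij})}{Q_{ab}(\UU_{ij})}. \notag
\end{align}
First I would handle the graph term. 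By Lemma~\ref{lemma:alpha}, $\widehat{\R}_{aa'}=\R_{aa'}(1+O(\varepsilon_n))$, so the logarithm is $O(\varepsilon_n)$ (using $\log(1+x)=O(x)$ and that $\R_{aa'}\to 0$ makes the $\log\frac{1-\R}{1-\widehat\R}$ contribution even smaller). The edge count $e(\{i\}, \U^{(0)}_{a'})$ is at most the degree of node $i$ in $G_1^\mathrm{b}$, which under $\mathbf{h}\in\mathcal{G}$ concentrates at $\Theta(\log n)$ (since $\R$ scales like $\log n / n$ and $|\U^{(0)}_{a'}|=\Theta(n)$). Thus the whole graph term is $O(\varepsilon_n \log n)$ with high probability.

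Next I would handle the rating term. By Lemma~\ref{lemma:personalization}, each log-ratio $\log(\widehat{Q}_{ab}(z)/Q_{ab}(z)) = \log(1+O(\varepsilon'_n)) = O(\varepsilon'_n)$ uniformly over $a,b,z$. The number of summands is the number of sampled entries in row $i$, namely $\sum_{b}\sum_{j\in\I_b^{(0)}}\mathbbm{1}\{\UU_{ij}\ne\mathsf{e}\}$, which is $\mathrm{Binomial}$-like with mean $mp$. Using the sample-complexity regime~\eqref{eq:mnp} one checks $mp = O(\log n)$ (up to the cluster-size constants), and a standard concentration bound shows the number of sampled entries in a single row is $O(\log n)$ with probability $1-o(1/n)$. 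Hence the rating term is $O(\varepsilon'_n \log n)$.

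Combining the two bounds gives $|L_a(i)-\tl_a(i)| = O(\max\{\varepsilon_n,\varepsilon'_n\}\log n)$, and I would define $\xi_n$ to absorb the hidden constant, so that $\xi_n\in\Omega(\max\{\varepsilon_n,\varepsilon'_n\})\cap o(1)$ as claimed. Finally I take a union bound over the $n$ users and the $k_1$ candidate clusters; since the per-user failure probability is $o(1/n)$, the statement holds simultaneously for all $i\in[n]$ and $a\in[k_1]$ with probability $1-o(1)$. The main obstacle I anticipate is the \emph{uniformity} over all $n$ users: the row-wise degree and sample-count concentration must each hold with probability $o(1/n)$ so that the union bound survives, which forces the concentration inequalities (Chernoff on the row degree and on the per-row sample count) to be applied with enough slack; everything else is routine propagation of the multiplicative error bounds from Lemmas~\ref{lemma:alpha} and~\ref{lemma:personalization} through $\log(1+x)\approx x$.
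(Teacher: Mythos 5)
Your proposal is correct and takes essentially the same approach as the paper's proof: the same graph-term/rating-term decomposition, propagation of the multiplicative errors from Lemmas~\ref{lemma:alpha} and~\ref{lemma:personalization} through the logarithms via $\log(1+x)=O(x)$, Chernoff bounds showing the per-user degree and per-row sample count are both $O(\log n)$ with per-user failure probability $o(1/n)$, and a union bound over all users. The only cosmetic difference is that you state explicitly the implicit working assumption $mp = O(\log n)$ (operating at the sample-complexity threshold), which the paper also relies on in its bound~\eqref{eq:ha2} but leaves unremarked.
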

The proof of Lemma~\ref{claim:con2} can be found in Appendix~\ref{appendix:con2}. 
Combining~\eqref{eq:max2} and Lemma~\ref{claim:con2} via the triangle inequality, we have that all the users satisfy $L_{\sigma(i)}(i) > \max_{\ba \in [k_1]\setminus\{\sigma(i)\}} L_{\ba}(i)$. This ensures the success of Stage 3, i.e., $\widehat{\sigma}(i) = \sigma(i), \forall i \in [n]$.

\noindent{\textbf{Analysis of Stage 4:}} 
The analysis of Stage 4 is similar to that of Stage 3. Lemma~\ref{lemma:movie} below states that all the $m$ items can be classified into the correct cluster when $mnp$ satisfies~\eqref{eq:mnp2}.
\begin{lemma}\label{lemma:movie}
	Suppose $\mathbf{h} \in \mathcal{G}$ and $(\sigma^{(0)}, \tau^{(0)}) \in \mathcal{G}'$.	If 
	\begin{align}
	mnp \ge \frac{((1+\epsilon) - (I_{2}/k_2)) m\log m}{d_{\mathcal{I}}/k_1}, \label{eq:mnp2}
	\end{align}
	with probability $1-o(1)$, all the items $j \in [m]$ satisfy 
	\begin{align}
L'_{\tau(j)}(j) > \max_{\bar{b} \in [k_2]\setminus\{\tau(j)\}} L'_{\bar{b}}(j).
	\end{align}
\end{lemma}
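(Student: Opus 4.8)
The plan is to mirror the Stage~3 argument almost verbatim, exchanging the roles of the two sides: $n \leftrightarrow m$, $k_1 \leftrightarrow k_2$, $I_1 \leftrightarrow I_2$, $d_{\U} \leftrightarrow d_{\I}$, the social graph $G_1^{\mathrm b}$ by the item graph $G_2^{\mathrm b}$, and the intra/inter connection probabilities $(\alpha_1,\beta_1)$ by $(\alpha_2,\beta_2)$. First I would introduce the \emph{exact} item likelihood
\begin{align}
\tl'_b(j) &:= \sum_{b'\in[k_2]} e(\{j\}, \I^{(0)}_{b'})\cdot \log\!\left(\R'_{bb'}/(1-\R'_{bb'})\right) \notag \\
&\qquad + \sum_{a \in [k_1]}\sum_{i \in \U_a^{(0)}} \mathbbm{1}\{\UU_{ij} \!\ne\! \mathsf{e} \}\cdot \log Q_{ab}(\UU_{ij}),
\end{align}
which replaces the estimates $\widehat{\R}'$, $\{\widehat{Q}_{ab}\}$ appearing in $L'_b(j)$ by their true values, and then prove the analog of Lemma~\ref{lemma:La}: conditioned on $\mathbf h \in \mathcal G$ and $(\sigma^{(0)},\tau^{(0)}) \in \mathcal G'$, for a fixed item $j \in \I_b$ one has $\tl'_b(j) > \max_{\bb \ne b}\tl'_{\bb}(j) + (\epsilon/2)\log m$ with probability at least $1 - m^{-(1+\epsilon/4)}$.

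For this per-item bound I would fix a competitor $\bb \ne b$ and decompose $\tl'_b(j)-\tl'_{\bb}(j)$ exactly as in~\eqref{eq:aba}, into an edge term $\lambda_2\bigl(e(\{j\},\I_b^{(0)}) - e(\{j\},\I_{\bb}^{(0)})\bigr)$ with $\lambda_2 := \log\frac{(1-\beta_2)\alpha_2}{(1-\alpha_2)\beta_2}$, and a rating term $\sum_{a\in[k_1]}\sum_{i\in\U_a^{(0)}}\mathbbm 1\{\UU_{ij}\ne\mathsf e\}\log\frac{Q_{ab}(\UU_{ij})}{Q_{a\bb}(\UU_{ij})}$. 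Writing the edge differences via i.i.d.\ $\mathrm{Bern}(\alpha_2)$ and $\mathrm{Bern}(\beta_2)$ variables (legitimate because $h_2^{\mathrm b}$ being good, by Lemma~\ref{lemma:new}, forces $G_2^{\mathrm b}$ to have essentially the full SBM density) and the rating differences via $\mathrm{Bern}(p)$ sampling masks together with ratings drawn from the appropriate $Q_{a\bb}$, I would apply the Chernoff bound with $t=1/2$. The edge MGF then contributes the optimal exponent $(\sqrt{\alpha_2}-\sqrt{\beta_2})^2 = I_2\log m/m$ per pair, i.e.\ $\approx I_2\log m/k_2$ over a cluster of size $\approx m/k_2$, while each sampled rating contributes $\E(e^{-\frac12 A}) = 1 - pH^2(Q_{ab},Q_{a\bb})$, giving $\approx \sum_{a\in[k_1]}npH^2(Q_{ab},Q_{a\bb})/k_1$ over the $\approx n/k_1$ users per cluster. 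Here the weak-recovery guarantee~\eqref{eq:s1} is what makes the misclassified fractions negligible, so that $|\T_{bb}| \ge (\tfrac1{k_2}-o(1))m$ and $|\U_a^{(0)}\setminus\U_a|$ is small, exactly as in the facts used to pass to~\eqref{eq:aii}. Combining the two exponents, invoking the hypothesis~\eqref{eq:mnp2} and the bound $d_{\I}\le\sum_{a\in[k_1]}H^2(Q_{ab},Q_{a\bb})$, yields $\PP\bigl(\tl'_b(j)-\tl'_{\bb}(j) < (\epsilon/2)\log m\bigr) \le m^{-(1+\epsilon/4)}$, after which a union bound over the $k_2-1$ competitors $\bb$ finishes the per-item claim.

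With the per-item statement in hand, a union bound over all $m$ items costs a factor $m$, so the event that \emph{every} item satisfies $\tl'_{\tau(j)}(j) > \max_{\bb\ne\tau(j)}\tl'_{\bb}(j) + (\epsilon/2)\log m$ holds with probability $1 - m^{-\epsilon/4} = 1-o(1)$. It then remains to transfer this conclusion from the exact likelihoods to the true ones $L'_b(j)$ used by {\sc Mc2g}: I would prove the item analog of Lemma~\ref{claim:con2}, namely $|L'_b(j) - \tl'_b(j)| \le \xi'_m\log m$ uniformly for some $\xi'_m = o(1)$, which follows from the parameter-accuracy guarantees of Lemmas~\ref{lemma:alpha} and~\ref{lemma:personalization}. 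A triangle-inequality argument identical to the one ending Stage~3 then gives $L'_{\tau(j)}(j) > \max_{\bb\ne\tau(j)}L'_{\bb}(j)$ for all $j$, which is precisely the conclusion of Lemma~\ref{lemma:movie}.

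I expect no genuinely new obstacle, since the argument is symmetric to Stage~3; the two points requiring care are (i) checking that the rating term concentrates, which needs the standing assumption $n = \omega(\log m)$ so that the $\approx np/k_1$ sampled ratings per user cluster grow, and (ii) ensuring that all the $o(1)$ corrections (from the weak-recovery errors $\eta_n,\gamma_n$ and from the parameter-estimation errors $\varepsilon_n,\varepsilon'_n$) are \emph{uniform} over items and over the conditioning event $\mathbf h\in\mathcal G$, $(\sigma^{(0)},\tau^{(0)})\in\mathcal G'$, so that the per-item exponent $1+\epsilon/4$ survives the union bound. These corrections are exactly what shrink the nominal margin from $\epsilon/2$ to $\epsilon/4$ in the exponent, so establishing their uniform smallness—rather than any single large-deviation estimate—is the main technical point.
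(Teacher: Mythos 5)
Your proposal is correct and is exactly the argument the paper intends: the paper proves Lemma~\ref{lemma:movie} only by remarking that ``the analysis of Stage 4 is similar to that of Stage 3,'' and your role-swapped reconstruction---the exact item likelihood, the per-item Chernoff bound at $t=1/2$ with the bound $d_{\I}\le\sum_{a\in[k_1]}H^2(Q_{ab},Q_{a\bar b})$ mirroring Lemma~\ref{lemma:La}, the union bounds over competitors and items, and the transfer to $L'_b(j)$ via the item analog of Lemma~\ref{claim:con2} built on Lemmas~\ref{lemma:alpha} and~\ref{lemma:personalization}---is precisely that symmetric argument. No discrepancy with the paper's approach.
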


Finally, based on the outputs $\{\widehat{\U}_a\}_{a\in [k_1]}$ and $\{\widehat{\I_b}\}_{b\in[k_2]}$ of {\sc Mc2g}, one can recover the nominal matrix $\widehat{\mathsf{N}}$ via \emph{majority voting}. Specifically, for $a\in[k_1]$ and $b\in[k_2]$, we define $u_{ab}:= \argmax_{z \in \mathcal{Z}} \sum_{i\in \widehat{\U}_a}\sum_{j\in \widehat{\I}_b}\mathbbm{1}\{\UU_{ij}=z \}$, and we then set   
\begin{align}
\widehat{\mathsf{N}}_{ij}= u_{ab}, \quad \mathrm{if }\  i\in \widehat{\U}_a, j\in \widehat{\I}_b.  \label{eq:nominal2}
\end{align}
The correctness of~\eqref{eq:nominal2} follows from the fact that $\sum_{i\in \widehat{\U}_a}\sum_{j\in \widehat{\I}_b}\mathbbm{1}\{\UU_{ij}=z \} \approx mnQ_{ab}(z)/(k_1k_2)$ (which is due to the Chernoff bound).

\subsection{The Overall Success Probability} \label{sec:error}
Let $E_{\text{suc}}$ be the event that {\sc Mc2g} exactly recovers the nominal matrix. From the analyses of Stages 2--4, we know that for any $\mathbf{h} \in \mathcal{G}$ and $(\sigma^{(0)}, \tau^{(0)}) \in \mathcal{G}'$, 
\begin{align}
\PP(E_{\text{suc}}|\mathbf{h},(\sigma^{(0)}, \tau^{(0)})) \ge 1- o(1), \label{eq:egg1}
\end{align}
where~\eqref{eq:egg1} is uniform in $\mathbf{h} \in \mathcal{G}$ and $(\sigma^{(0)}, \tau^{(0)}) \in \mathcal{G}'$.
Therefore, the overall success probability is lower bounded as 
\begin{align}
\PP(E_{\text{suc}}) &= \sum_{\mathbf{h} \in \mathcal{G}} \PP(\mathbf{h})\PP(E_{\text{suc}}|\mathbf{h}) + \sum_{\mathbf{h} \in \mathcal{B}_{\mathbf{h}}} \PP(\mathbf{h})\PP(E_{\text{suc}}|\mathbf{h}) \notag \\
&\ge  \sum_{\mathbf{h} \in \mathcal{G}} \PP(\mathbf{h}) \sum_{(\sigma^{(0)}, \tau^{(0)}) \in \mathcal{G}'} \PP((\sigma^{(0)}, \tau^{(0)})|\mathbf{h}) \notag \\
&\qquad\qquad\qquad\qquad \times \PP(E_{\text{suc}}|\mathbf{h},(\sigma^{(0)}, \tau^{(0)})) \notag \\
&\ge (1 - o(1)) \sum_{\mathbf{h} \in \mathcal{G}} \PP(\mathbf{h}) \sum_{(\sigma^{(0)}, \tau^{(0)}) \in \mathcal{G}'} \PP((\sigma^{(0)}, \tau^{(0)})|\mathbf{h}) \label{eq:egg2}\\
&\ge (1 - o(1)) (1 - \sqrt{\epsilon_n})^2 \label{eq:egg3} \\
&= (1 - o(1)), \notag 
\end{align}
where~\eqref{eq:egg2} is due to~\eqref{eq:egg1}, and~\eqref{eq:egg3} follows from~\eqref{eq:72}.

\section{Proof sketch of Theorem~\ref{thm:2}} \label{sec:proof2}
The proof techniques used for Theorem~\ref{thm:2} is a generalization of the techniques used in~\cite[Sec. IV-B]{eric2021}, thus we only provide a proof sketch here. The key idea is to first show that the maximum likelihood (ML) estimator $\phi_{\mathrm{ML}}$ is the optimal estimator (as proved in~\cite[Eqn.~(33)]{eric2021}), and then analyze the error probability with respect to $\phi_{\mathrm{ML}}$---the crux of the analysis is to focus on a subset of of events that are most likely to induce errors, and to prove the tightness of the Chernoff bound.   

To analyze $\phi_{\text{ML}}$, we first show that under the model parameter  $(\sigma,\tau,\N)$ (where a single parameter $\xi$ is used to be the abbreviation of $(\sigma,\tau,\N)$ in the following), the log-likelihood of observing $(\UU, G_1, G_2)$ is
\begin{align}
&\log\PP_{\xi}(\UU,G_1,G_2) \!=\! e_1^{\sigma}\log\!\frac{\beta_1(1\!-\!\alpha_1)}{\alpha_1(1\!-\!\beta_1)} \!+\! e_2^{\tau}\log\!\frac{\beta_2(1\!-\!\alpha_2)}{\alpha_2(1\!-\!\beta_2)} \notag \\
&\quad + \sum_{a \in [k_1]}\sum_{b\in [k_2]}\sum_{z \in \mathcal{Z}} |\mathcal{D}_{ab}^z(\xi)|\cdot \log Q_{ab}(z) + C_0,
\end{align}
where $e_1^{\sigma}$ is the number of inter-cluster edges in $G_1$ with respect to $\sigma$; $e_2^{\tau}$ is the number of inter-cluster edges in $G_2$ with respect to $\tau$; $\mathcal{D}_{ab}^z(\xi) = \{(i,j) \in [n] \times[m]: \sigma(i) = a, \tau(j) = b, \UU_{ij}=z \}$ is the number of observed ratings $z$ corresponding to user cluster $\U_a$ and item cluster $\I_b$; and $C_0$ is a constant that is independent of $(\sigma,\tau,\N)$. 

Suppose $\xi$ is the ground truth that governs the model from now on, and note that the ML estimator $\phi_{\mathrm{ML}}$ succeeds if $\xi$ is the most likely model parameter in $\Xi$ conditioned on the observation $(\UU,G_1,G_2)$, i.e., $\log\PP_{\xi}(\UU,G_1,G_2)$ is larger than any other $\log\PP_{\xi'}(\UU,G_1,G_2)$ for $\xi' \in \Xi \setminus \{\xi\}$. In fact, what we show in the converse proof is that when $mnp$ is less than the bound in~\eqref{eq:converse1}, with high probability there exists another model parameter $\xi' \in \Xi \setminus \{\xi\}$ such that the likelihood $\log\PP_{\xi'}(\UU,G_1,G_2)$ achieves the maximum.

Specifically, let $\xi' \ne \xi$ be a model parameter that is identical to $\xi$ except that its first component $\sigma'$ differs from $\sigma$ by only two labels, i.e., $\sum_{i\in[n]} \mathbbm{1}\{\sigma'(i) \ne \sigma(i)\} = 2$. As the distinction between $\xi'$ and $\xi$ is small, the probability that $\log\PP_{\xi'}(\UU,G_1,G_2) \ge  \log\PP_{\xi}(\UU,G_1,G_2)$ turns out to be relatively large, which is at least  
\begin{align}
\frac{1}{4}\exp\left\{-(1\!+\!o(1))\frac{2I_1(\log n)}{k_1} \!-\! (1\!+\!o(1))\frac{2mpd_{\U}}{k_2} \right\} \label{eq:xi}
\end{align}
due to the tightness of the Chernoff bound (which can be proved by generalizing~\cite[Lemma 2]{eric2021}). In fact, one can find a subset $\Xi_0 \subseteq \Xi$ of model parameters such that $|\Xi_0| = \Theta(n)$ and each element in $\xi_0 \in \Xi_0$ satisfies~\eqref{eq:xi} (i.e., the probability that $\xi_0$ induces an error is relatively large). This, together with the assumption that $mnp < k_2\left[\frac{1-\epsilon}{2} \!-\! \frac{I_{1}}{k_1}\right]\! n\log n/d_{\U}$, eventually implies that with probability approaching one, there exists at least one $\xi_0 \in \Xi_0$ such that $\log\PP_{\xi_0}(\UU,G_1,G_2) \ge  \log\PP_{\xi}(\UU,G_1,G_2)$. Thus, the ML estimator fails.   

In a similar and symmetric fashion, one can show that the ML estimator fails with probability approaching one,  when $mnp < k_1\left[\frac{1-\epsilon}{2} \!-\! \frac{I_{2}}{k_2}\right]\! m\log m/d_{\I}$. This completes the proof of the converse part.


\section{Experiments} \label{sec:experiment}

In this section, we apply the simplified version of {\sc Mc2g} mentioned in Remark~\ref{remark:simplified} (without the information splitting step), as the sizes of the graphs $m$ and $n$ cannot be made arbitrarily large in the experiments.\footnote{As discussed in Remark~\ref{remark:simplified}, the information splitting method is merely for the purpose of analysis, and the first part of the graphs $(G_1^\mathrm{a}, G_2^\mathrm{a})$ turns out to be too sparse to achieve weak recovery of clusters when $m$ and $n$ are not large enough.} That is, the four stages are applied to the  \emph{fully-observed} graphs $(G_1,G_2)$. While this implementation is slightly different  from the original algorithm as described in Algorithm~\ref{algorithm:1}, its  empirical performance nonetheless demonstrates a keen agreement with the theoretical guarantee for the original {\sc Mc2g} in Theorem~\ref{thm:algorithm} (as shown in Section~\ref{sec:syn} below).

\begin{figure}[t]
	\centering
	\includegraphics[width=0.4\textwidth]{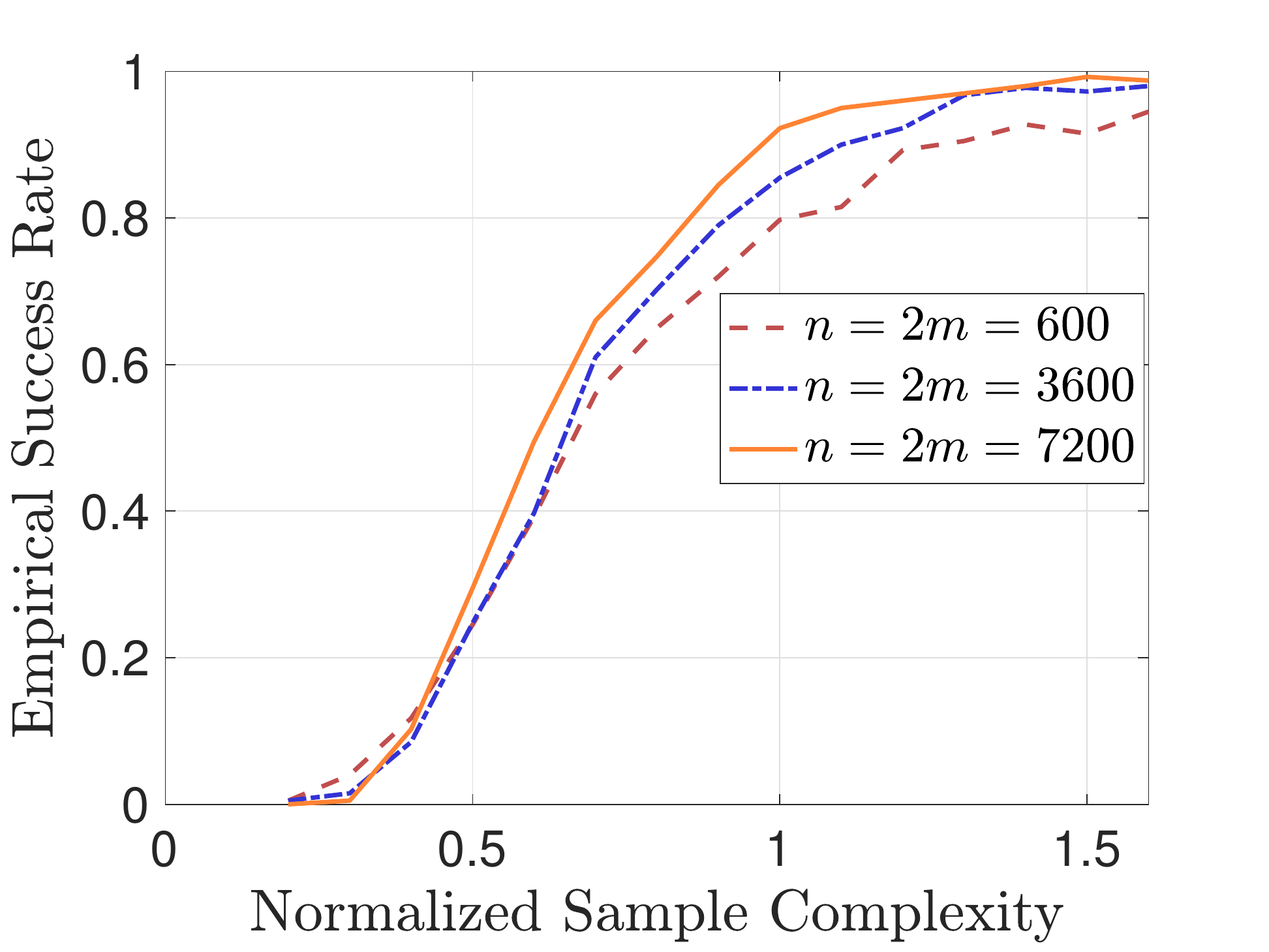}
	\caption{The empirical success rate (over 400 trials) vs. the normalized sample complexity under the setting described in Section~\ref{sec:syn}. }
	\label{fig:syn1}
\end{figure}

\subsection{Verification  of Theorem~\ref{thm:algorithm} on synthetic data} \label{sec:syn}
We verify the theoretical guarantee provided in  Theorem~\ref{thm:algorithm} on a synthetic dataset generated according to a symmetric setting described as follows. The setting contains $k_1 = 3$ user clusters $\{\U_1, \U_2, \U_3\}$, $k_2 = 4$ item clusters $\{\I_1, \I_2, \I_3, \I_4\}$, nominal ratings $\{z_{ab}\}$ satisfying
\begin{align*}
&z_{11}=5, \ \ z_{12}=1, \ \ z_{13}=4, \ \ z_{14}=2, \\
&z_{21}=2, \ \ z_{22}=4, \ \ z_{23}=5,\ \ z_{24}=1,\\
&z_{31}=3, \ \ z_{32}=2, \ \ z_{33}=5,\ \ z_{34}=5,
\end{align*}
which are chosen from $\mathcal{Z}=\{1,2,3,4,5\}$, and personalization distributions 
\begin{align}
Q_{V|Z}(v|z) = \begin{cases} 0.6,  &\text{ if } \  v = z, \\
0.1, &\text{ if } \ v \ne z.
\end{cases}
\end{align}
We set $n = 2m$, and both $I_1$ and $I_2$ (the qualities of social and item graphs) to $2$. Fig.~\ref{fig:syn1} shows the \emph{empirical success rate}  as a function of the \emph{normalized sample complexity} for three different values of $m$ and $n$. The empirical success rate is averaged over $400$ random trials, and the normalized sample complexity is defined (according to Theorem~\ref{thm:algorithm}) as $mnp$ divided by
\begin{align}
\max\left\{\frac{(1-(I_1/k_1))n\log n}{d_{\mathcal{U}}/k_2}, \frac{(1-(I_2/k_2))m\log m}{d_{\mathcal{I}}/k_1} \right\}.
\end{align}
It can be seen from Fig.~\ref{fig:syn1} that as the normalized sample complexity increases, the empirical success rate also increases and becomes close to one when the normalized sample complexity exceeds one (corresponding exactly to the condition for {\sc Mc2g} to succeed).

\subsection{Comparing {\sc Mc2g}  with other algorithms on synthetic data} \label{sec:syn_compare}

Next, we compare {\sc Mc2g} to several existing recommendation algorithms that leverage graph side information on another synthetic dataset. The competitors include the matrix factorization with social regularization (SoReg)~\cite{ma2011recommender}, and a spectral clustering method with local refinements using {\em only} the social graph or {\em only} the item graph as side information by Ahn {\em et al.}~\cite{ahn2018binary}.  In fact, we have also compared our algorithm to other matrix completion algorithms such as biased matrix factorization (MF)~\cite{koren2008factorization} and TrustSVD~\cite{guo2015trustsvd}, but they did not perform as well as the competitors we chose and {\sc Mc2g}.  This synthetic dataset is simpler compared to the one in Section~\ref{sec:syn}, as we need to choose the ratings $\mathcal{Z}$ to be \emph{binary} (as other competing algorithms are amenable only to binary ratings). It contains $n=3000$ users partitioned into two user clusters, $m = 3000$ items partitioned into three item clusters, and we set the qualities of graphs $I_1 = 1.5$ and $I_2 = 2$, as well as the nominal ratings to be $z_{11} = 0$, $z_{12} = 1$, $z_{13} = 0$, $z_{21} = 0$, $z_{22} = 0$, $z_{23} = 1$. The personalization distributions are modelled as additive Bern$(0.25)$ noise, i.e., $Q_{V|Z}(v|z)$ equals $0.75$ if $v = z$, and equals $0.25$ otherwise.

To ensure that the comparisons are fair, we quantize the outputs of the other algorithms to be $\{0,1\}$-valued. We measure the performances using the {\em mean absolute error} (MAE) 
\begin{align}
\mathrm{MAE} := \sum_{i=1}^n\sum_{j = 1}^m \frac{|\widehat{\mathsf{N}}_{ij} - \mathsf{N}_{ij}|}{mn}.
\end{align} Fig.~\ref{fig:syn_comparison} shows the MAE (averaged over $100$ random trials) of each algorithm when $p \in [0.001, 0.01]$. It is clear that {\sc Mc2g} is orders of magnitude better than the competing algorithms in terms of the MAEs for this synthetic dataset.  

\begin{figure}[t]
	\centering
	\includegraphics[width=0.4\textwidth]{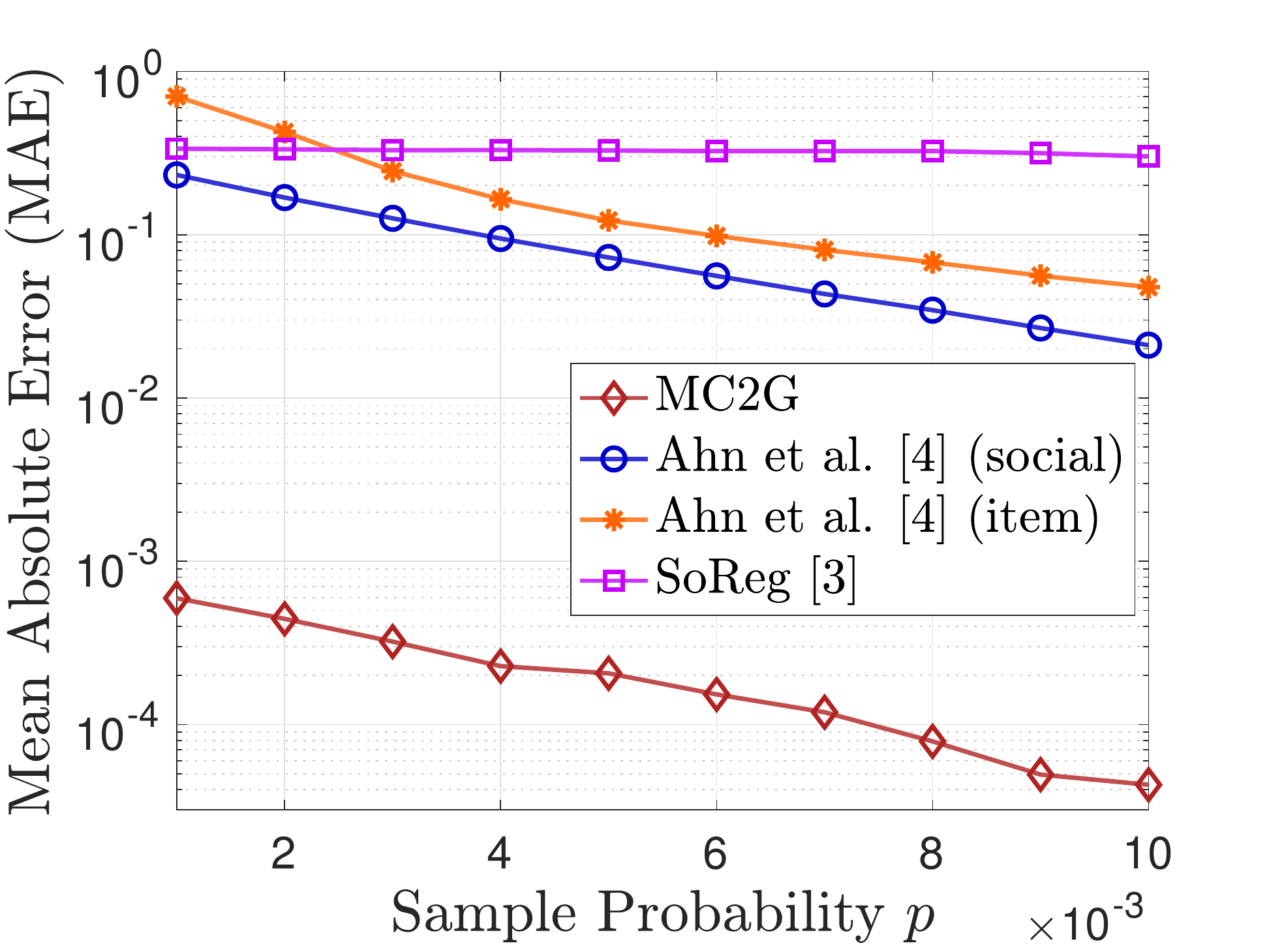}
	\caption{Comparisons of MAEs of different algorithms under the synthetic setting described in Section~\ref{sec:syn_compare}.}
	\label{fig:syn_comparison}
\end{figure}

\subsection{Comparing {\sc Mc2g} with other algorithms on real graphs} \label{sec:real_compare}
To demonstrate that {\sc Mc2g} is amenable to datasets with real graphs, we applied it to real social and item similarity graphs.
\begin{itemize}[wide, labelwidth=!, labelindent=0pt]
\item We adopt the LastFM social network~\cite{rozemberczki2020characteristic} (collected in March 2020) as the social graph. Each node is a LastFM user, while each edge represents mutual follower relationships between users. We sub-sample $n = 1806$ users from the LastFM social network. These users are partitioned into four clusters with sizes $(n_1,n_2,n_3,n_4) = (497,327,552,430)$, and the empirical connection probabilities are 
\begin{align*}
\R= \begin{bmatrix}
9.07 & 0.10 & 0.05 & 0.14 \\
0.10 & 18.3 & 0.17 & 0.25 \\
0.05 & 0.17 & 9.3  & 0.21 \\
0.14 & 0.25 & 0.21  & 22.0 \\
\end{bmatrix} \times 10^{-3}.
\end{align*}

\item We adopt the political blogs network~\cite{adamic2005political} as the item similarity graph. Each node represents a blog that is either liberal-leaning or conservative-leaning, and each edge represents a link between two blogs. This network contains $m = 1222$ blogs which are partitioned into two clusters with sizes $(m_1,m_2)=(586,636)$, and the empirical connection probabilities are
\begin{align*}
&\R' = \begin{bmatrix}
42.6 & 4.2 \\
4.2 & 38.8
\end{bmatrix} \times 10^{-3}.
\end{align*}
\end{itemize}
We also choose $\mathcal{Z} = \{0,1\}$, set the nominal ratings to be 
\begin{align*}
&z_{11} = 0, \ \ z_{21} = 0, \ \ z_{31} = 1, \ \ z_{41} = 1, \\
&z_{12} = 0, \ \ z_{22} = 1, \ \ z_{32} = 0, \ \ z_{42} = 1,
\end{align*}
and model the personalization distributions as additive Bern$(0.1)$ noise. 
The personalized ratings matrix $V$ is then synthesized based on the user and item clusters, nominal ratings, and personalization distributions described above.

\begin{figure}[t]
	\centering
	\includegraphics[width=0.4\textwidth]{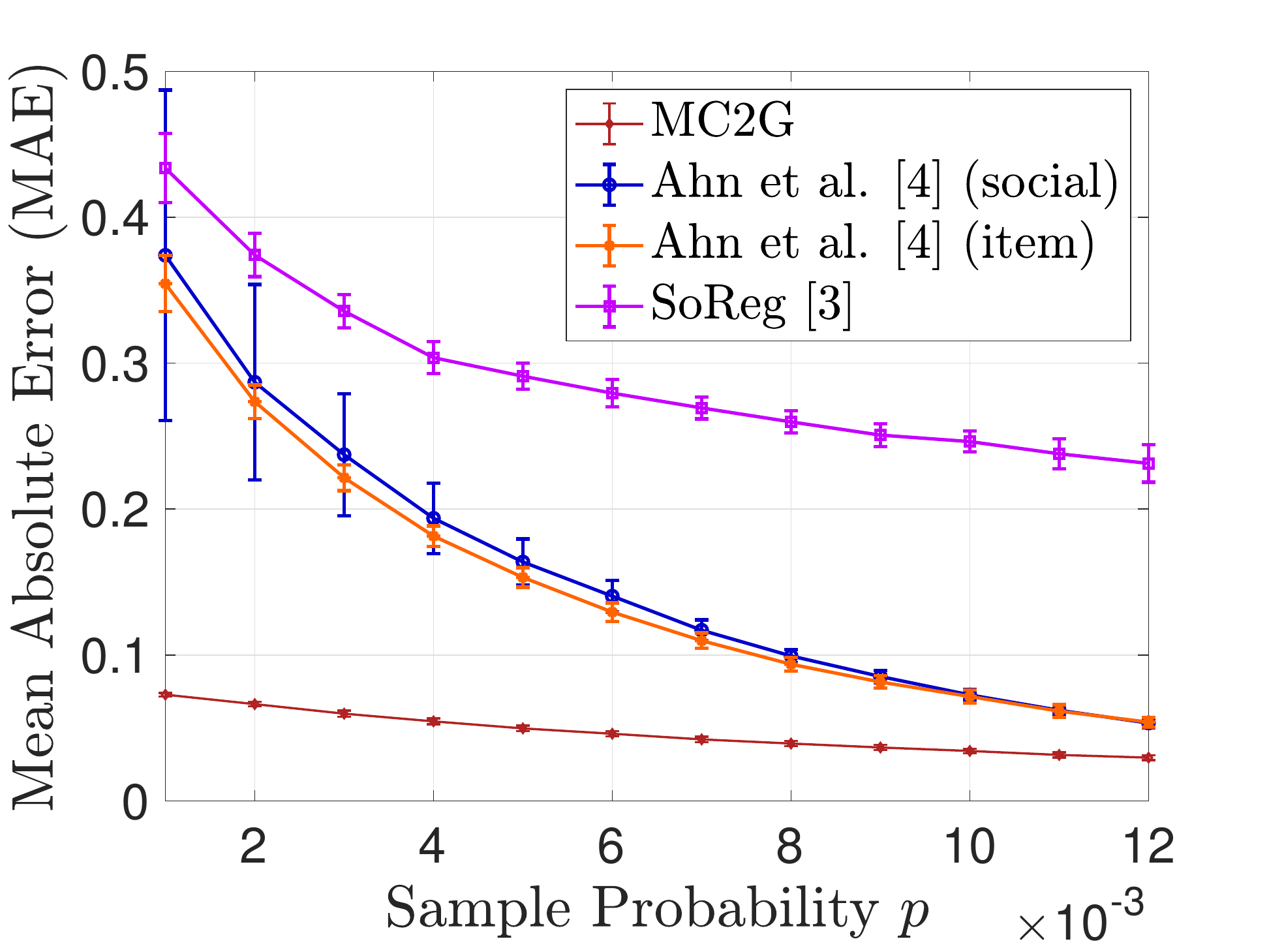}
	\caption{Comparisons of MAEs of different algorithms under the semi-real setting described in Section~\ref{sec:real_compare}, where we adopt the LastFM social network and political blog networks as social and item similarity graphs respectively. The length of each errorbar above and below each data point represents the standard deviations across the $100$ independent trials.}
	\label{fig:real_comparison}
\end{figure}

We compare {\sc Mc2g} to the algorithms introduced in~\ref{sec:syn_compare} on this semi-real dataset (real social and item similarity graphs with synthetic ratings).  Fig.~\ref{fig:real_comparison} shows the MAE (averaged over $100$ trials) of each algorithm when $p \in [0.001,0.012]$. Clearly, {\sc Mc2g} is superior to the other algorithms, and the advantage is more significant when the sample probability $p$ is small. In addition, the errorbars above and below each data point (representing one standard deviation) for {\sc Mc2g} are fairly small, demonstrating the statistical robustness of {\sc Mc2g}. The average running time (in seconds) of each algorithm, when $p = 0.01$, is as follows, showing that the running time of {\sc Mc2g} is commensurate with its prediction abilities.   

\vspace{5pt}

\begin{small}
	\begin{tabular}{cccc}
		\toprule
		{\sc Mc2g}      & Ahn et al.~\cite{ahn2018binary} (social) & \cite{ahn2018binary} (item) & SoReg~\cite{ma2011recommender}   \\
		\midrule
$5.199s$ & $4.647s$ & $0.973s$ & $0.010s$   \\
		\bottomrule
	\end{tabular}
\end{small}

\vspace{5pt}

It is worth mentioning that the running times of {\sc Mc2g} and the algorithm in Ahn et al.~\cite{ahn2018binary} with either a social or an item graph are dominated by the spectral initialization steps---this is the reason why the running times of {\sc Mc2g} are longer than the algorithm in~\cite{ahn2018binary} (as {\sc Mc2g} performs spectral clustering for \emph{both} social and item graphs). SoReg~\cite{ma2011recommender} runs faster since it does not perform spectral clustering; however, its performance is rather poor as can be seen from Figs.~\ref{fig:syn_comparison} and~\ref{fig:real_comparison}.

\appendices
\section{Proof of Lemma~\ref{lemma:new}}\label{appendix:1}

Consider the process of first generating a sub-graph $H_1^{\mathrm{a}}$ and then generating a sub-SBM $G_1^\mathrm{a}$ on the sub-graph $H_1^{\mathrm{a}}$. The probability that an edge $E_{ii'}$ (connecting nodes $i$ and $i'$) appears in $G_1^\mathrm{a}$ equals\footnote{Specifically, the probability that an edge $E_{ii'}$ appears in $G_1^\mathrm{a}$ is equal to the probability of $E_{ii'}$ belonging to $H_1^{\mathrm{a}}$ multiplied by the probability of generating $E_{ii'}$ in the sub-SBM $G_1^\mathrm{a}$.} $1/\sqrt{\log n}$ multiplied by $\alpha_1$ or $\beta_1$ (depending on whether $i$ and $i'$ are in the same community). Thus, a key observation is that the aforementioned process  is equivalent to generating $\widetilde{G}_1$ directly. By this observation and recalling  that a spectral clustering method running on $\widetilde{G}_1$ ensures $l_1(\sigma^{(0)}, \sigma) \le \eta_n$ with probability at least $1 - \epsilon_n$~\cite[Theorem 6]{NIPS2016_a8849b05}, we have    
\begin{align}
\sum_{h_1^{\mathrm{a}}} \PP(H_1^{\mathrm{a}} = h_1^{\mathrm{a}}) P_{\text{suc}}(h_1^{\mathrm{a}}) \ge 1 - \epsilon_n, \label{eq:contri}
\end{align}
where $P_{\text{suc}}(h_1^{\mathrm{a}})$ is the probability that a spectral clustering method running on $G_1^\mathrm{a}$ (which depends on $h_1^{\mathrm{a}}$) ensures $l_1(\sigma^{(0)}, \sigma) \le \eta_n$.
Let $\mathcal{H}_1^{\mathrm{a},\mathcal{G}}$ and $\mathcal{H}_1^{\mathrm{a},\mathcal{B}}$ respectively be the sets of good and bad sub-graphs $h_1^{\mathrm{a}}$ . Suppose the probability of generating a good sub-graph  (i.e., $h_1^{\mathrm{a}} \in \mathcal{H}_1^{\mathrm{a},\mathcal{G}}$) is less than $1 - \sqrt{\epsilon_n}$. Then, by the definition of the good sub-graphs $h_1^{\mathrm{a}}$, 
\begin{align*}
&\sum_{h_1^{\mathrm{a}}} \PP(H_1^{\mathrm{a}} = h_1^{\mathrm{a}}) P_{\text{suc}}(h_1^{\mathrm{a}}) \\
&< \sum_{h_1^{\mathrm{a}} \in \mathcal{H}_1^{\mathrm{a},\mathcal{G}}} \PP(H_1^{\mathrm{a}} = h_1^{\mathrm{a}}) +  \sum_{h_1^{\mathrm{a}} \in \mathcal{H}_1^{\mathrm{a},\mathcal{B}}} \PP(H_1^{\mathrm{a}} = h_1^{\mathrm{a}}) (1- \sqrt{\epsilon_n}) \\
&= \sum_{h_1^{\mathrm{a}} \in \mathcal{H}_1^{\mathrm{a},\mathcal{G}}}\!\! \PP(H_1^{\mathrm{a}} \!=\! h_1^{\mathrm{a}}) + (1\!-\! \sqrt{\epsilon_n})\Bigg(1 \!-\!\!\! \sum_{h_1^{\mathrm{a}} \in \mathcal{H}_1^{\mathrm{a},\mathcal{G}}}\!\! \PP(H_1^{\mathrm{a}} = h_1^{\mathrm{a}})\Bigg) \\
&< 1- \epsilon_n,
\end{align*}
which yields a contradiction to~\eqref{eq:contri}. Thus, we conclude that with probability at least $1 - \sqrt{\epsilon_n}$ over the generation of $H_1^{\mathrm{a}}$, the randomly generated $h_1^{\mathrm{a}}$ is a good sub-graph.

For each user node $i \in [n]$, the expected degree of $i$ in $H_1^{\mathrm{a}}$ is $(n-1)/\sqrt{\log n}$. By applying the multiplicative form of the Chernoff bound, one can show that with probability at least $1 - \exp(-\Theta(n/\sqrt{\log n}))$, the degree of $i$ in the randomly generated sub-graph $h_1^{\mathrm{a}}$ is at most $2n/\sqrt{\log n}$. A union bound over all user nodes guarantees that, with high probability, the degrees of \emph{all} the nodes in $h_1^{\mathrm{a}}$ are at most $2n/\sqrt{\log n}$, which further implies the sub-graph $h_2^\mathrm{b}$ is good. 
Finally, applying a union bound implies that that with high probability, both $h_1^{\mathrm{a}}$ and $h_1^{\mathrm{b}}$ are good sub-graphs simultaneously.  

In a similar manner, we can also prove the analogous statements for $H_2^{\mathrm{a}}$ and $H_2^\mathrm{b}$.

\section{Proof of Lemma~\ref{lemma:alpha}} \label{appendix:alpha}
First recall the definitions of $\s_{a\ba}$ and $\T_{b\bb}$ in Section~\ref{sec:proof} (Analysis of Stage 3). As it is assumed that $\mathbf{h} \in \mathcal{G}$ and $(\sigma^{(0)}, \tau^{(0)}) \in \mathcal{G}'$, we know that (i) $|\s_{a\ba}| \ge ((1/k_1)-\eta_n)n$ when $a = \ba$, and $|\s_{a\ba}| \le \eta_n n$ when $a \ne \ba$; and (ii) $|\T_{b \bb}| \ge ((1/k_2)-\gamma_n)m$ when $b = \bb$, and $|\T_{b\bb}| \le \gamma_n m$ when $b \ne \bb$.

We first analyze the estimates $\{\widehat{\mathsf{B}}_{aa}\}_{a\in[k_1]}$ in Eqn.~\eqref{eq:est1}. By letting $\{X_{k}\} \stackrel{\text{i.i.d.}}{\sim} \text{Bern}(\alpha_1)$ and $\{Y_{k}\} \stackrel{\text{i.i.d.}}{\sim} \text{Bern}(\beta_1)$, we have
$e(\U_a^{(0)},\U_a^{(0)}) = \sum_{k=1}^{B_1} X_k + \sum_{k=1}^{B_2} Y_k,$ 
where $B_1 := \sum_{\ba \in[k_1]} \binom{|\s_{\ba \ba}|}{2}$ and $B_2 := \binom{|\U_{a}^{(0)}|}{2} - B_1$. Note  that 
\begin{align*}
\mu_{\mathsf{B}_{aa}} \!:= \E[ e(\U_a^{(0)},\U_a^{(0)})] \!\le \alpha_1 \binom{|\U_{a}^{(0)}|}{2}.
\end{align*}
On the other hand, since the degree of any nodes in $h_1^{\mathrm{b}}$ is at least $n(1 - 2/\sqrt{\log n})$ (or equivalently, the number of non-edges of any nodes is at most $2n/\sqrt{\log n}$), we know that
\begin{align*}
&e(\U_a^{(0)}\!,\U_a^{(0)}) \!\ge\!\!\!\! \sum_{k=1}^{B_1 - \frac{n}{2}\frac{2n}{\sqrt{\log n}}} \!\!\!X_k, \ \text{and} \  \mu_{\mathsf{B}_{aa}} \!\ge\!\! \left[B_1 \!-\! \frac{n^2}{\sqrt{\log n}}\right]\alpha_1.
\end{align*}
Applying the multiplicative form of the Chernoff bound yields that for any $\delta \in (0,1)$, with probability at least $1 - 2\exp(-\delta^2 \mu_{\mathsf{B}_{aa}}/3)$, the numerator $e(\U_a^{(0)},\U_a^{(0)})$ satisfies 
$$(1-\delta)\!\left(B_1 \!-\! \frac{n^2}{\sqrt{\log n}}\right)\alpha_1 \le e(\U_a^{(0)},\U_a^{(0)}) \le (1+\delta)\alpha_1 \binom{|\U_a^{(0)}|}{2}.$$
As the estimate $\widehat{\mathsf{B}}_{aa}=e(\U_a^{(0)},\U_a^{(0)})/\binom{|\U_a^{(0)}|}{2}$, we then have 
\begin{align*}
\left(1 - \delta - c_1\eta'_n - \frac{c_2}{\sqrt{\log n}}\right) \alpha_1 \le \widehat{\mathsf{B}}_{aa} \le  \left(1+\delta\right) \alpha_1.
\end{align*}
for some constant $c_1, c_2 > 0$. By choosing $\delta = 1/\sqrt{\log n}$, we complete the proof for $\widehat{\mathsf{B}}_{aa}$.

The analyses of other estimates $\{\widehat{\mathsf{B}}_{aa'}\}_{a\ne a'}$, $\{\widehat{\R}_{bb}\}_{b\in[k_2]}$, and $\{\widehat{\R}_{bb'}\}_{b\ne b'}$  are similar, thus we omit them for brevity (except that we need to replace $\eta_n$ by $\gamma_n$ for $\{\widehat{\R}_{bb}\}_{b\in[k_2]}$, and $\{\widehat{\R}_{bb'}\}_{b\ne b'}$).  Therefore, one can find a sequence $\varepsilon_n \in \Omega(\max\{\gamma_n,\eta_n,1/\sqrt{\log n} \}) \cap o(1)$ such that~\eqref{eq:alpha2} holds.

\section{Proof of Lemma~\ref{lemma:personalization}} \label{appendix:personalization}
Let us recall the definition of $\widehat{Q}_{ab}(z)$ in~\eqref{eq:eq}, in which the numerator takes the form
\begin{align}
  |\mathcal{Q}_{ab}^z| &= \sum_{i \in \U^{(0)}_a} \sum_{j \in \I^{(0)}_b} \mathbbm{1}\left\{\UU_{ij} =z \right\}. \label{eq:ess}
 \end{align}
Let $\{T_{ij}\} \stackrel{\text{i.i.d.}}{\sim} \mathrm{Bern}(p)$, $\{Z^{ab}_{ij}\} \stackrel{\text{i.i.d.}}{\sim} Q_{ab}$  for all $a\in[k_1]$ and $b \in[k_2]$. Thus, the RHS of~\eqref{eq:ess} can be rewritten as 
\begin{align*} 
\sum_{i\in\s_{aa}}\sum_{j\in\T_{bb}}\! T_{ij}\mathbbm{1}(Z^{ab}_{ij}=z) + \!\sum_{\bar{a}\ne a}\sum_{\bar{b}\ne b} \sum_{i \in \s_{\bar{a}a}} \sum_{j \in \T_{\bar{b}b}}\! T_{ij}\mathbbm{1}(Z^{\bar{a}\bar{b}}_{ij}=z).
\end{align*}
Note that the number of summands in the first term
$$|i\in \s_{aa}|\cdot |j\in \T_{bb}| \ge  \left[\left((n/k_1)-\eta_n n\right)\left((m/k_2)-\gamma_n m\right)\right] := L.$$
Thus, the expectation of $|\mathcal{Q}_{ab}^z|$ satisfies
\begin{align*}
&\E(|\mathcal{Q}_{ab}^z|) \ge L \cdot \E(T_{ij}\mathbbm{1}(Z^{ab}_{ij}=z)) \ge LpQ_{ab}(z), \quad \mathrm{and}\\
&\E(|\mathcal{Q}_{ab}^z|) \le L\cdot \E(T_{ij}\mathbbm{1}(Z_{ij}=z)) + (mn/(k_1k_2)-L)\E(T_{ij}),
\end{align*}
where the upper bound is due to the fact that $\mathbbm{1}(Z^{\bar{a}\bar{b}}_{ij}=z) \le 1$. Applying the Chernoff bound yields that with probability $1 - \exp\left(-\Theta(\delta_n^2 \E(|\mathcal{Q}_{ab}^z|))\right)$, for all $z \in \mathcal{Z}$, 
\begin{align}
&|\mathcal{Q}_{ab}^z|  \ge  (1-\delta)L pQ_{ab}(z), \quad \mathrm{and} \\
&|\mathcal{Q}_{ab}^z| \le  (1+\delta)(1+\Theta(\max\{\eta_n, \gamma_n\}))L pQ_{ab}(z),
\end{align}
where $\delta \in (0,1)$. Choosing $\delta = 1/\sqrt{\log n}$, we ensure that with probability $1 - o(1)$, for all $z \in \mathcal{Z}$, $a\in[k_1]$, and $b \in [k_2]$,
\begin{align}
\left|\frac{\widehat{Q}_{ab}(z)}{Q_{ab}(z)} - 1\right| = \mathcal{O}(\max\{\eta_n, \gamma_n, 1/\sqrt{\log n}\}).
\end{align}

\section{Proof of Lemma~\ref{claim:con2}} \label{appendix:con2}
First note that 
\begin{align}
&\big| L_a(i) \!-\! \tl_a(i) \big| \!\le \!\!\sum_{a'\in[k_1]} \!\!e(\{i\}, \U^{(0)}_{a'}) \left|\log\frac{\R_{aa'}(1\!-\!\widehat{\R}_{aa'})}{\widehat{\R}_{aa'}(1\!-\!\R_{aa'})}\right| \notag \\
&\qquad \qquad  + \sum_{b \in [k_2]}\sum_{j \in \I_b^{(0)}} \mathbbm{1}\{\UU_{ij} \!\ne\! \mathsf{e} \} \cdot \left|\log \frac{Q_{ab}(\UU_{ij})}{\widehat{Q}_{ab}(\UU_{ij})} \right|.
\end{align}
As $\sum_{a'\in[k_1]} e(\{i\}, \U^{(0)}_{a'})$ represents the degree of user $i$ in the social graph, and its expectation $\mu_i$ satisfies $n\beta_1 \le \mu_i \le n\alpha_1$.  By applying the Chernoff bound, we have that for any $\kappa > 1$,   
\begin{align}
\PP\Bigg(\sum_{a'\in[k_1]} e(\{i\}, \U^{(0)}_{a'}) \ge (1+\kappa)n\alpha_1 \Bigg) \le e^{-\frac{\kappa}{3}n\beta_1}. \label{eq:ch}
\end{align}
We choose $\kappa$ to be a  large enough constant that ensures the RHS of~\eqref{eq:ch} to scale as $o(n^{-1})$. 
Then, by applying the union bound over all the $n$ users, we  have that with probability $1 - o(1)$, all the users $i \in [n]$ satisfy
\begin{align}
\sum_{a'\in[k_1]} e(\{i\}, \U^{(0)}_{a'}) \le  (1+\kappa)n\alpha_1 =  c_1\log n, \label{eq:ha}
\end{align}
for some constant $c_1 > 0$.
Also, note that the term 
$\sum_{b \in [k_2]}\sum_{j \in \I_b^{(0)}} \mathbbm{1}\{\UU_{ij} \!\ne\! \mathsf{e} \}$ corresponds to the number of observed ratings for each user. By a similar analysis (based on the Chernoff bound), one can show that with probability $1 - o(1)$, all the users $i \in [n]$ satisfy
\begin{align}
\sum_{b \in [k_2]}\sum_{j \in \I_b^{(0)}} \mathbbm{1}\{\UU_{ij} \!\ne\! \mathsf{e} \} \le c_2 \log n,\label{eq:ha2}
\end{align} 
for some constant $c_2 > 0$.

Recall from Lemmas~\ref{lemma:alpha} and~\ref{lemma:personalization} that the estimated connection probabilities satisfy $\big|(\widehat{\R}_{aa'} \!-\! \mathsf{B}_{aa'})/\R_{aa'} \big| \le \varepsilon_n$ for all $a,a' \in [k_1]$, and the estimated personalization distribution $\big|(\widehat{Q}_{ab}(z)/Q_{ab}(z))-1\big| \le \varepsilon'_n$ for all $a\in[k_1], b\in[k_2], z\in \mathcal{Z}$.  By applying a Taylor series expansion, we then have \begin{align}
\left|\log\frac{\R_{aa'}(1\!-\!\widehat{\R}_{aa'})}{\widehat{\R}_{aa'}(1\!-\!\R_{aa'})}\right| \le 2 \varepsilon_n,  \ \left|\log \frac{Q_{ab}(\UU_{ij})}{\widehat{Q}_{ab}(\UU_{ij})} \right| \le 2 \varepsilon'_n. \label{eq:ha3}
\end{align}
Combining~\eqref{eq:ha},~\eqref{eq:ha2}, and~\eqref{eq:ha3}, we complete the proof of Lemma~\ref{claim:con2}.   
\ifCLASSOPTIONcaptionsoff
  \newpage
\fi

\bibliographystyle{IEEEtran}
\bibliography{reference}

\end{document}